\renewcommand{\cite}[1]{\citep{#1}} % by default parenthesis
\newcommand{\simaWcitet}{\citet{sima_neuromata1998}}
\newcommand{\kappelcitet}{\citet{kappel_markov2014}}
\theoremstyle{definition}
\newtheorem{theorem}{Theorem}[section]
\newtheorem{example}[theorem]{Example}
\newtheorem{lemma}[theorem]{Lemma}
\newtheorem{remark}[theorem]{Remark}
\newtheorem{claim}[theorem]{Claim}
\numberwithin{equation}{section}
\newcommand{\fname}[1]{\mathit{#1}} % function name
\newcommand{\romI}{\emph{(i)}} % roman numeral 1
\newcommand{\romII}{\emph{(ii)}} % roman numeral 2
\newcommand{\romIII}{\emph{(iii)}} % roman numeral 3
\renewcommand{\qed}{\hfill$\square$} % right align QED
\newcommand{\sima}{\v{S}\'{\i}ma}
\newcommand{\simaW}{\sima{} and Wiedermann}
\newcommand{\kappel}{Kappel~et~al.}
\newcommand{\interval}[2]{\left[#1,#2\right]}
\newcommand{\nat}{\mathbb{N}}
\newcommand{\natzero}{\mathbb{N}_0} % without zero
\newcommand{\powset}[1]{\mathcal{P}(#1)}
\newcommand{\real}{\mathbb{R}}
\newcommand{\set}[1]{\{#1\}}
\newcommand{\ssize}[1]{\left|#1\right|}
\newcommand{\mult}{\mathbin{\cdot}}
\newcommand{\myfrac}[2]{{#1}/{#2}} % notation for fractions
\newcommand{\braces}[1]{\left(#1\right)}
\newcommand{\jset}{X} % a generic set symbol
\newcommand{\jump}{\epsilon}
\newcommand{\lang}{\mathcal{L}}
\newcommand{\len}[1]{\left|#1\right|} % length of a string
\newcommand{\str}{\alpha}
\newcommand{\strB}{\beta}
\newcommand{\strC}{\gamma}
\newcommand{\mstr}[1]{(#1)} % make string
\newcommand{\sym}[1]{S_{#1}} % string symbol with subscript
\newcommand{\symm}{S} % string symbol without subscript
\newcommand{\prefix}[2]{#1_{\rightarrow#2}}
\newcommand{\nfaX}[1]{M_{#1}}
\newcommand{\statesX}[1]{Q_{#1}}
\newcommand{\alpX}[1]{\Sigma_{#1}}
\newcommand{\startstateX}[1]{q^\mathrm{s}_{#1}}
\newcommand{\acceptstatesX}[1]{F_{#1}}
\newcommand{\trfX}[1]{\delta_{#1}} % transition function
\newcommand{\nfatupX}[1]{(%
    \statesX{#1},%
    \alpX{#1},%
    \trfX{#1},%
    \startstateX{#1},%
    \acceptstatesX{#1})}
\newcommand{\nfa}{\nfaX{}}
\newcommand{\states}{\statesX{}}
\newcommand{\alp}{\alpX{}}
\newcommand{\startstate}{\startstateX{}}
\newcommand{\acceptstates}{\acceptstatesX{}}
\newcommand{\trf}{\trfX{}}
\newcommand{\nfatup}{\nfatupX{}}
\newcommand{\pset}{P} % set of parallel states
\newcommand{\be}{\boldsymbol{B}}
\newcommand{\nrl}[1]{\lang(#1)} % neuron language
\newcommand{\monoregexpr}[1]{#1\in\be(\str)\Leftrightarrow\text{$\str$ embeds a string $\strB\in\nrl{#1}$}}
\newcommand{\nw}{\boldsymbol{\mathcal{N}}}
\newcommand{\inr}{\mathcal{I}}
\newcommand{\onr}{\mathcal{O}}
\newcommand{\aux}{\mathcal{A}}
\newcommand{\weightinit}{\mathcal{W}}
\newcommand{\nwtup}{(%
    \inr,%    
    \onr,%
    \aux,%   
    \weightinit%
    )}
\newcommand{\conn}[1]{\fname{edges}(#1)}
\newcommand{\post}[1]{\fname{post}(#1)}
\newcommand{\pre}[1]{\fname{pre}(#1)}
\newcommand{\activeX}[1]{N_{#1}}
\newcommand{\run}{\mathcal{R}}
\newcommand{\haltstate}{q^\mathrm{h}}
\newcommand{\loops}{V}
\newcommand{\suitable}{clean}
\newcommand{\Suitable}{Clean}
\newcommand{\useful}{reachable}
\newcommand{\pairset}[1]{p(#1)}
\newcommand{\paircount}[1]{c(#1)}
\newcommand{\nfaref}{\boldsymbol{\mathcal{M}}}
\newcommand{\implementation}{automaton implementation}
\newcommand{\worname}{w_1} % weight 'or' name
\newcommand{\wandname}{w_2} % weight 'and' name
\newcommand{\wor}[2]{\worname(#1,#2)}
\newcommand{\wand}[2]{\wandname(#1,#2)}
\newcommand{\triggers}{\mathcal{T}} % trigger neurons
\newcommand{\context}{\fname{con}} % set of context neurons
\newcommand{\qsX}[1]{(q_{#1},\sym{#1})}
\newcommand{\conv}{V}
\newcommand{\outcontext}{C}
\begin{document}

%================================================
\title{
    Positive Neural Networks in Discrete Time Implement Monotone-Regular Behaviors
}
\author{%
    Tom~J.~Ameloot%
        \thanks{T.J.~Ameloot is a Postdoctoral Fellow of the Research Foundation -- Flanders (FWO).}~   
    and
    Jan~Van~den~Bussche
    \\
    {\small Hasselt University \& transnational University of Limburg}
}
\date{}

\maketitle{}

    \textbf{Abstract.}
    We study the expressive power of positive neural networks. 
    The model uses positive connection weights and multiple input neurons.     
    Different behaviors can be expressed by varying the connection weights.   
    We show that in discrete time, and in absence of noise, the class of positive neural networks captures the so-called monotone-regular behaviors, that are based on regular languages.
    A finer picture emerges if one takes into account the delay by which a monotone-regular behavior is implemented.     
    Each monotone-regular behavior can be implemented by a positive neural network with a delay of one time unit.
    Some monotone-regular behaviors can be implemented with zero delay. 
    And, interestingly, some simple monotone-regular behaviors can not be implemented with zero delay.  

%\setcounter{tocdepth}{2} 
%\tableofcontents

%================================================

\section{Introduction}

%-------------------------------------
\paragraph*{Positive neural networks}

Based on experimental observations, \citet{douglas_circuits2004} have proposed an abstract model of the neocortex consisting of interconnected winner-take-all circuits.
Each winner-take-all circuit consists of excitatory neurons that, besides exciting each other, indirectly inhibit each other through some inhibition layer. This causes only a few neurons in the circuit to be active at any time.
\citet{kappel_markov2014} further demonstrate through theoretical analysis and simulations that the model of interconnected winner-take-all circuits might indeed provide a deeper understanding of some experimental observations.
In this article, we take two inspirational points from this model, that we discuss below.

First, although a biological neural network in general contains both excitatory and inhibitory connections between neurons~\cite{gerstner_book2014}, excitation and inhibition are not combined in an arbitrary fashion in the above model of interconnected winner-take-all circuits. In that model, the meaning seems to be mostly contained in the excitatory connections whereas inhibitory connections play a more regulatory role such as controlling how many neurons can become simultaneously active.
Based on this apparent value of excitatory connections, in this article we are inspired to study neural networks that are simplified to contain only excitatory connections between neurons.
Technically, we consider so-called positive neural networks, where all connections are given a weight that is either strictly positive or zero.

Second, it appears useful to study neural network models with multiple input neurons. 
In the model of interconnected winner-take-all circuits, each circuit has multiple input neurons that can be concurrently active. This allows each circuit to receive rich input symbols. The input neurons of a circuit could receive stimuli directly from sensory organs, or from other circuits.
It would be fascinating to understand how neurons build concepts or recognize patterns over such rich inputs.

Based on the above inspiration, in this article we study a simple positive neural network model with multiple input neurons, operating in discrete time.
As mentioned above, the use of nonnegative weights allows only excitation between neurons and no inhibition.
In our model, each positive neural network has distinguished sets of input neurons, output neurons, and auxiliary neurons.
The network may be recurrent, i.e., the activation of a neuron may indirectly influence its own future activation.
As in some previous models~\cite{sima_neuromata1998,sima_survey2003}, we omit noise and learning.
We believe that the omission of inhibition (i.e., negative connection weights) might allow for a better understanding of the foundations of computation in neural networks, where different features gradually increase the expressive power (see also Section~\ref{sec:conclusion}). 
Excitation between neurons seems to be a basic feature that we can not omit.
The omission of inhibition leads to a notion of monotonicity that we will discuss later in the Introduction.

As a final point for the motivation of the model, we mention that biological neurons seem to mostly encode information in the timing of their activations and not in the magnitude of the activation signals~\cite{gerstner_book2014}. 
In this perspective, one may view discrete time models like ours as highlighting the causal steps of the neuronal computation.
The discrete time step could in principle be chosen very small.

%-------------------------------------
\paragraph*{Expressivity study}

Our aim in this article is to better understand what computations can or cannot be performed by positive neural networks. 
We show that positive neural networks represent the class of so-called \emph{monotone-regular behaviors}.  The relevance of this result is discussed later in the Introduction. We first provide the necessary context.

Many previous works have investigated the expressive power of various kinds of neural network models~\cite{sima_survey2003}.
A common idea is to relate neural networks to theoretical computation devices like automata~\cite{hopcroft-ullman1979,sipser_book2006}.
A lower bound on the expressiveness of a neural network model can be established by simulating automata with neural networks in that model.
Conversely, an upper bound on the expressiveness can be established by simulating neural networks with automata.
In previous works, simulations with neural networks of both deterministic finite automata~\cite{alon_1991,indyk_1995,omlin_1996,horne_fsm1996} and nondeterministic finite automata~\cite{carrasco_nfa1999} have been studied.
Some models of neural networks even allow the simulation of arbitrary Turing machines, that are much more powerful than finite automata, see e.g.~\cite{siegelmann_turing1995,maass_lowerbounds1996}. However, the technical constructions used for the simulation of such powerful machines are not necessarily biologically plausible.

In this article, our approach in the expressivity study is to describe the behaviors exhibited by positive neural networks, as follows.
An input symbol in our model is a subset of input neurons that are concurrently active. For example, if the symbol $\set{a,b,c}$ is presented as input to the network at time $t$ then this means that $a$, $b$, and $c$ are the only input neurons that are (concurrently) active at time $t$. 
The empty symbol would mean that no input neurons are active. 
Output symbols are defined similarly, but over output neurons instead.
Now, we define behaviors as functions that transform each sequence of input symbols to a sequence of output symbols.
Different behaviors can be expressed by varying the connection weights of a positive neural network.
By describing such behaviors, we can derive theoretical upper and lower bounds on the expressivity of positive neural networks.
We emphasize that we feed sequences of input symbols to the neural networks, not single symbols.

Our assumption of multiple input neurons that may become concurrently active is in contrast to models of past expressivity studies, where either input encodings were used that \romI\ made only one input neuron active at any given time~\cite{sima_neuromata1998,carrasco_nfa1999,carrasco_stable2000} or \romII\ presented a single bit string just once over multiple input neurons after which they remained silent~\cite{sima_survey2003}.
Essentially, multiple parallel inputs versus a single sequential input is a matter of input alphabet. One might propose that an external process could transform multiple parallel inputs to a single sequential one (say, a stream of bits), after which previous results might be applied, e.g.~\cite{sima_neuromata1998}. However, in a biologically plausible setting there is no such external process: in general, it seems that inputs arrive from multiple sensory organs in parallel, and an internal circuit receives inputs from multiple other internal circuits in parallel as remarked at the beginning of the Introduction.
Because our aim is to better understand the biologically plausible setting, we therefore have to work with an input alphabet where multiple (parallel) input neurons may become concurrently active.

%-------------------------------------
\paragraph*{Monotone-regular behaviors}

To describe the behaviors exhibited by positive neural networks, we use the class of regular languages, which are those languages recognized by finite automata~\cite{hopcroft-ullman1979,sipser_book2006}.
Previously, \simaWcitet{} have shown that neural networks in discrete time that read bit strings over a single input neuron recognize whether prefixes of the input string belong to a regular language or not.
In their technical construction, \simaW\ essentially simulate nondeterministic finite automata.

In this article, we simulate nondeterministic finite automata in the setting of positive neural networks.%
    \footnote{For example, the finite automaton in Figure~\ref{fig:nfa} is simulated by the positive neural network in Figure~\ref{fig:nw}.}
Using the simulation, we show that the class of positive neural networks captures the so-called monotone-regular behaviors.
A monotone-regular behavior describes the activations of each output neuron with a regular language over input symbols, where each symbol may contain multiple input neurons as described above.
Monotonicity means that each output neuron is activated whenever strings of the regular language are embedded in the input, regardless of any other activations of input neurons. 
Phrased differently, enriching an input with more activations of input neurons will never lead to fewer activations of output neurons.
Monotonicity arises because neurons only excite each other and do not inhibit each other.
This notion did not appear explicitly in the work by \simaWcitet{} because their neural networks exactly recognize regular languages over the single input neuron by using inhibition (i.e., negative connection weights): inhibition allows to explicitly test for the absence of input activations at certain times.

Delay is a standard notion in the study of neural networks~\cite{sima_survey2003}. 
Intuitively, delay is the number of extra time steps needed by the neural network before it can produce the output symbols prescribed by the behavior.
%Now, a finer picture emerges if one takes into account the delay by which a positive neural network implements a monotone-regular behavior. 
%
We show that each monotone-regular behavior can be implemented by a positive neural network with a delay of one time unit.
This result is in line with the result by \simaWcitet{}, but it is based on a new technical construction to deal with the more complex input symbols generated by concurrently active input neurons.
We simulate automaton states by neurons as expected, but we design the weights of the incoming connections to a neuron to express simultaneously \romI\ an ``or'' over context neurons that provide working memory and \romII\ an ``and'' over all input neurons mentioned in an input symbol.
As in the work by \simaWcitet{}, the constructed neural network may activate auxiliary neurons in parallel. Accordingly, our simulation preserves the nondeterminism, or parallelism, of the simulated automaton.
As an additional result, we show that a large class of monotone-regular behaviors can be implemented with zero delay.
And, interestingly, some simple monotone-regular behaviors can provably not be implemented with zero delay.

To the best of our knowledge, the notion of monotone-regular behaviors is introduced in this article for the first time. But this notion is a natural combination of some previously existing concepts, results, and general intuition, as follows. 
First, it is likely that both the temporal structure and spatial structure of sensory inputs are important for biological organisms~\cite{buonomano2009}. The temporal structure describes the timing of sensory events, and the spatial structure describes which and how many neurons are used to represent each sensory event. 
Second, the well-known class of regular languages from formal language theory describes symbol sequences that exhibit certain patterns or regularities~\cite{hopcroft-ullman1979}; temporal structure is represented by the ordering of symbols, and spatial structure is given by the individual symbols. 
The relationship between regular languages and neural network models has also been investigated before~\cite{sima_neuromata1998}. 
Third, without inhibition, neurons only excite each other and therefore an increased activity of input neurons will not lead to a decreased activity of output neurons. Without inhibition, neurons will respond to patterns embedded in the input stream regardless of any other simultaneous patterns, giving rise to a form of monotonicity on the resulting behavior.

%-------------------------------------
\paragraph*{Relevance}
We conclude the Introduction by placing our result in a larger picture.
The intuition explored in this article, is that neural networks in some sense represent grammars.
A grammar is any set of rules describing how to form sequences of symbols over a given alphabet; such sequences may be called sentences.

In an experiment by \citet{reber_grammar1967}, subjects were shown sentences generated by an artificial grammar, but the rules of the grammar were not shown. Subjects were better at memorizing and reproducing sentences generated by the grammar when compared to sentences that are just randomly generated. Moreover, subjects were generally able to classify sentences as being grammatical or not. Interestingly, however, subjects could not verbalize the underlying rules of the grammar. This experiment suggests that organisms learn patterns from the environment when the patterns are sufficiently repeated. Those patterns get embedded into the neural network. The resulting grammar can not necessarily be described or explicitly accessed by the organism.
     
The grammar hypothesis is to some extent confirmed by neuronal recordings of brain areas involved with movement planning in monkeys~\cite{shima_motor2000,isoda_saccade2003}. These experimental findings suggest that movement sequences are represented by two groups of neurons: the first group represents the temporal structure, and the second group represents individual actions. Neurons in the first group might be viewed as stringing together the output symbols represented by the second group. Hence, the first group might represent the structure of a grammar, indicating the allowed sentences of output symbols.

The above experiments are complemented by \kappelcitet{}, who have theoretically shown and demonstrated with computer simulations that neural winner-take-all circuits can (learn to) express hidden Markov models.
Hidden Markov models are finite state machines with transition probabilities between states, and each state has a certain probability to emit symbols. Such models describe grammars, because each visited state can contribute symbols to an increasing sentence.
One of the insights by \kappel\ is that by repeatedly showing sentences generated by a hidden Markov model to a learning winner-take-all circuit, the states of the Markov model are eventually encoded by global network states, i.e., groups of activated neurons.
This way, the neural network builds an internal model of how sentences are formed by the hidden grammar.
Interestingly, the computer simulations by \kappelcitet{} clearly demonstrate (and visualize) that neurons learn to cooperate in a chain-like fashion, expressing the symbol chains in the hidden grammar. This corresponds well to the earlier predictions~\cite{reber_grammar1967,shima_motor2000,isoda_saccade2003}.
We might speculate that, if one assumes a real-world environment to be a (complex) hidden Markov model, organisms with a neural network can learn to understand the patterns, or sentences, generated by that environment.

In this article, we have made the above grammar intuition formal for positive neural networks. By characterizing the expressive power of positive neural networks with monotone-regular behaviors, the activation of an output neuron may be viewed as the recognition of a pattern in the input. This way, each output neuron represents a grammar: the output neuron recognizes which input sentences satisfy the grammar.
Moreover, our finding that nondeterministic finite automata can be simulated by positive neural networks is in line with the expressivity result of \kappelcitet{} because hidden Markov models generalize nondeterministic automata~\cite{dupont_hmm2005}: in a standard nondeterministic automaton, all successor states of a given state are equally likely, whereas a hidden Markov model could assign different transition probabilities to each successor state.
The simulation of automata by previous works~\cite{sima_survey2003} and the current article, combined with the result by \kappelcitet{}, might provide a useful intuition: individual neurons or groups of neurons could represent automaton states of a grammar.

%-------------------------------------
\paragraph*{Outline}
This article is organized as follows.
We discuss related work in Section~\ref{sec:relwork}.
We provide in Section~\ref{sec:prelim} the necessary preliminaries, including the formalization of positive neural networks and monotone-regular behaviors.
Next, we provide in Section~\ref{sec:results} our results regarding the expressivity of positive neural networks.
We conclude in Section~\ref{sec:conclusion} with topics for future work.

%========================================
\section{Related Work}
\label{sec:relwork}

We now discuss several theoretical works that are related to this article.

The relationship between the semantic notion of monotonicity and the syntactic notion of positive weights is natural, and has been explored in other settings than the current article, see e.g.~\cite{beimel_monotone2006,legenstein_sign2008,daniels_monotone2010}. 
In particular, the paper by \citet{legenstein_sign2008} studies more generally the classification ability of sign-constrained neurons. In their setting, fixing some natural number $n$, there is one output neuron that is given points from $\real^n$ as presynaptic input. Each choice of weights from $\real^n$ allows the output neuron to express a binary (true-false) classification of input points, where ``true'' is represented by the activation of the output neuron.
By imposing sign-constraints on the weights, different families of output neurons are created.
For example, one could demand that only positive weights are used.
It turns out that the VC-dimension of sign-constrained neurons with $n$ presynaptic inputs is $n$, which is only one less than unconstrained neurons.%
    \footnote{For example, for the case of positive weights, the VC-dimension $n$ tells us that there is an input set $S\subseteq\real^n$ with $\ssize S=n$ that can be \emph{shattered} by the family of positive presynaptic weights, in the following sense: for each classification $h:S\to\set{1,0}$, there exists a positive presynaptic weight vector in $\real^n$ allowing the resulting single output neuron to express $h$ on $S$.}
Moreover, \citet{legenstein_sign2008} characterize the input sets (containing $n$ points from $\real^n$) for which sign-constrained neurons can express all binary classification functions.

Like in the Introduction, we define an input symbol as a set of concurrently active input neurons.
The results by \citet{legenstein_sign2008} can be used to better understand the nature of input symbols that are distinguishable from each other by a single output neuron having nonnegative presynaptic weights, also referred to as a positive neuron. 
Indeed, if we would receive a stream of input symbols and if we would like to individually classify each input symbol by the activation behavior of a positive output neuron (where activation means ``true''), the results by \citet{legenstein_sign2008} provide sufficient and necessary conditions on the presented input symbols to allow the output neuron to implement the classification.
It is also possible, however, to consider a temporal context for each input symbol: then, the decision to activate an output neuron for a certain input symbol depends on which input symbols were shown previously.
For example, considering an alphabet of input symbols $A$, $B$, and $C$, we might want to activate the output neuron on symbol $B$ while witnessing the string $\mstr{A,A,B}$ but not while only witnessing the string $\mstr{C,C,B}$. Hence, the output activation for symbol $B$ depends on the temporal context in which $B$ appears.
For a maximum string length $k$, and assuming that input symbols have maximum size $n$, one could in principle present a string of $k$ symbols to the output neuron in a single glance, using $n$ times $k$ (new) input neurons. In that approach, the output neuron could even recognize strings of a length $l\leq k$ that are embedded into the presented string, giving rise to the notion of monotonicity discussed in this article. For such cases, the results by \citet{legenstein_sign2008} could still be applied to better understand the nature of strings that can be recognized by a positive output neuron.
As mentioned in the Introduction, in this article we use regular languages to describe the strings of input symbols upon which an output neuron should become activated. In contrast to fixing a maximum length $k$ on strings, regular languages can describe arbitrarily long strings, by allowing arbitrary repetitions of substrings. By applying our expressivity lower bound (Theorem~\ref{theo:lower}), we can for example construct a neural network that activates an output neuron whenever a string of the form $A^*B$ is embedded at the end of the so-far witnessed stream of input symbols, where $A^*$ denotes that symbol $A$ may be repeated an arbitrary number of times. Moreover, the neural network can be constructed in such a way that the output neuron responds with a delay of at most one time unit compared to the pattern's appearance.
Results regarding regular languages, in combination with delay, can be analyzed in the framework of the current article. Instead of single output neurons, we consider larger networks where auxiliary neurons can assist the output neurons by reasoning over the temporal context of the input symbols.

Positive neural networks are also related to the monotone acyclic AND-OR boolean circuits, studied e.g.\ by \citet{alon_monotone1987}. Concretely, an AND-OR circuit is a directed acyclic graph whose vertices are gates that compute either an OR or an AND of the boolean signals generated by the predecessor gates. The input to the circuit consists of a fixed number of boolean variables. Each AND-OR circuit is a special case of a positive neural network: each AND and OR gate can be translated to a neuron performing the same computation, by applying positive edge weights to presynaptic neurons.

The neurons studied in the current article compute a Boolean linear threshold function of their presynaptic inputs: each neuron computes a weighted sum of the (Boolean) activations of its presynaptic neurons and becomes activated when that sum passes a threshold. Now, the acyclic AND-OR-NOT circuits discussed by \citet{parberry_book} are related to the AND-OR circuits mentioned above.%
    \footnote{\citet{parberry_book} actually refers to AND-OR-NOT circuits as AND-OR circuits because NOT gates can be pushed to the first layer, which can be used to establish a normal form where layers of AND gates alternate with layers of OR gates (with negation only at the first level).}
It turns out that every Boolean linear threshold function over $n$ input variables can be computed by an acyclic AND-OR-NOT circuit with a number of gates that is polynomial in $n$ and with a depth that is logarithmic in $n$.%
    \footnote{In particular, we are referring to Theorem~7.4.7 (and subsequently Corollary~6.1.6) of \citet{parberry_book}.}
One may call such circuits ``small'', although not of constant size in $n$.
The essential idea in this transformation, is that AND-OR-NOT circuits can compute the sum of the weights for which the corresponding presynaptic input is true, and subsequently compare that sum to a threshold; a binary encoding of the weights and threshold can be embedded into the circuit, but care is taken to ensure that this encoding is of polynomial size in $n$.
It appears, however, that NOT gates play a crucial role in the construction, for handling the carry bit in the summation.
The resulting circuit is therefore not positive (or monotone) in the sense of \citet{alon_monotone1987}.
For completeness, we remark that delay is increased if one would replace each Boolean linear threshold neuron with a corresponding AND-OR-NOT sub-circuit, at least if one time unit is consumed for calculating each gate of each sub-circuit. Given that the transformation produces sub-circuits of non-constant depth, it appears nontrivial to describe the overall delay exhibited by the network.

\citet{horne_fsm1996} show upper and lower bounds on the number of required neurons for simulating deterministic finite automata that read and write sequences of bits. Their approach is to encode the state transition function of an automaton as a Boolean function, that is subsequently implemented by an acyclic neural network.%
    \footnote{If there are $m$ automaton states then each state can be represented by $\lceil\log_2m\rceil$ bits. 
    %If there is one input bit then the Boolean function maps input tuples of $\lceil\log_2m\rceil+1$ bits, representing the current state and input bit, to tuples of $\lceil\log_2m\rceil+1$ bits, representing the next state and output bit.
    }
Each execution of the entire acyclic neural network corresponds to one update step of the simulated automaton.
A possible advantage of the method by \citet{horne_fsm1996}, is that the required number of neurons could be smaller than the number of automaton states. But, like in the discussion of AND-OR-NOT circuits above, the construction introduces a nontrivial delay in the simulation of the automaton if each neuron (or each layer of neurons) is viewed as consuming one time unit.
In this article we are not necessarily concerned with compacting automaton states in as few neurons as possible, but we are instead interested in recognizing a regular language under a maximum delay constraint (of one time unit) in the setting where multiple input neurons can be concurrently active and produce a stream of complex input symbols.
The construction by \citet{horne_fsm1996} can be modified to multiple input neurons that may become concurrently active.

For completeness, we remark that in this article we do not impose the restriction that the (simulated) automata are deterministic.
Moreover, in our simulation of automata, we take care to only introduce a polynomial increase in the number of neurons compared to the original number of automaton states (see~Theorem~\ref{theo:lower}).
In particular, if the original automaton is nondeterministic, the constructed neural network for this automaton will preserve that nondeterminism in the form of concurrently active neurons.
This stems from our original motivation to propose a construction that could in principle be biologically plausible, where multiple neurons could be active in parallel to explore the different states of the original automaton.
From this perspective, implementing a deterministic solution, where only one neuron is active at any given moment, would be less interesting.

Monotonicity in the context of automata has appeared earlier in the work by \citet{gecseg_monotone2001}. There, an automaton is called \emph{monotone} if there exists a partial order $\leq$ on the automaton states, such that each transition $(a,x,a')$, going from state $a$ to state $a'$ through symbol $x$, satisfies $a\leq a'$. 
Intuitively, this condition prohibits cycles between two different states while parsing a string. In particular, the same state $a$ may not be reused except when the previous state was already $a$ (i.e., self-looping on $a$ is allowed for a while). A language is called monotone when there is a monotone automaton that recognizes it.    
This notion of monotonicity is not immediately related to the current article, because our notion of monotonicity is not defined on automata (nor on neural networks) but on behaviors, which formalize semantics separate from the actual computation mechanism.
Moreover, the positive neural networks studied in this article may reuse the same global state while processing an input string, where a global state is defined as the set of currently activated neurons. For example, the empty global state could occur multiple times while processing an input string, even when this empty global state has precursor states and successor states that are not empty.

%========================================
\section{Preliminaries}
\label{sec:prelim}

%-------------------------------------
\subsection{Finite Automata and Regular Languages}
\label{sub:automata}

We recall the definitions of finite automata and regular languages~\cite{sipser_book2006}.
An \emph{alphabet} $\alp$ is a finite set.
A \emph{string $\str$ over $\alp$} is a finite sequence of elements from $\alp$. 
The empty string corresponds to the empty sequence.
We also refer to the elements of a string as its \emph{symbols}.
A \emph{language $\lang$ over $\alp$} is a set of strings over $\alp$. 
Languages can be finite or infinite.

The length of a string $\str$ is denoted $\len\str$.
For each $i\in\set{1,\ldots,\len\str}$, we write $\str_i$ to denote the symbol of $\str$ at position $i$.
We use the following string notation:
$
    \str = \mstr{\str_1, \ldots, \str_{\len\str}}.
$
For each $i\in\set{1,\ldots,\len\str}$, let $\prefix\str i$ denote the prefix $\mstr{\str_1,\ldots,\str_i}$.

A \emph{(finite) automaton} is a tuple $\nfa=\nfatup$ where 
\begin{itemize}
    \item $\states$ is a finite set of states;
    \item $\alp$ is an alphabet;
    \item $\trf$ is the transition function, mapping each pair $(q,\symm)\in\states\times\alp$ to a subset of $\states$;%
        \footnote{Importantly, this subset could be empty.}
    \item $\startstate$ is the start state, with $\startstate\in\states$; and,
    \item $\acceptstates\subseteq\states$ is the set of accepting states. 
\end{itemize}

Let $\str=\mstr{\str_1,\ldots,\str_n}$ be a string over $\alp$.
We call a sequence of states $q_1,\ldots,q_{n+1}$ of $\nfa$ a \emph{run of $\nfa$ on $\str$} if the following conditions are satisfied:
\begin{itemize}
    \item $q_1=\startstate$; and,
    \item $q_i\in\trf(q_{i-1}, \str_{i-1})$ for each $i\in\set{2,\ldots,n+1}$.
    %\item $q_{n+1}\in\acceptstates$.
\end{itemize}
We say that the run $q_1,\ldots,q_{n+1}$ is \emph{accepting} if $q_{n+1}\in\acceptstates$.
We say that the automaton $\nfa$ \emph{accepts} $\str$ if there is an accepting run of $\nfa$ on $\str$.%
    \footnote{Our definition of automata omits the special symbol $\jump$, that can be used to visit multiple states in sequence without simultaneously reading symbols from the input string. This feature can indeed always be removed from an automaton, without increasing the number of states~\cite{hopcroft-ullman1979}.}
Automaton $\nfa$ could be \emph{nondeterministic}: for the same input string $\str$, there could be multiple accepting runs.
See also Remark~\ref{remark:parallel} below.

We define the language $\lang$ over $\alp$ that is \emph{recognized} by $\nfa$: language $\lang$ is the set of all strings over $\alp$ that are accepted by $\nfa$.
Now, a language is said to be \emph{regular} if it is recognized by an automaton.

\begin{remark}
    \label{remark:parallel}
    We call an automaton $\nfa=\nfatup$ \emph{deterministic} if $\ssize{\trf(q,\symm)}=1$ for each $(q,\symm)\in\states\times\alp$, i.e., the successor state is uniquely defined for each combination of a predecessor state and an input symbol.    
    Nondeterministic automata are typically smaller and easier to understand compared to deterministic automata~\cite{sipser_book2006}.
    Moreover, if $\nfa$ is nondeterministic then it represents \emph{parallel} computation. 
    To see this, we can define an alternative but equivalent semantics for $\nfa$ as follows~\cite{sipser_book2006}.
    The \emph{parallel run} of $\nfa$ on an input string $\str=\mstr{\str_1,\ldots,\str_n}$ over $\alp$ is the sequence
    \[
        \pset_1,\ldots,\pset_{n+1},
    \]
    where $\pset_1=\set{\startstate}$ and $\pset_i=\set{q_i\in\states\mid\exists q_{i-1}\in\pset_{i-1}\text{ with }q_i\in\trf(q_{i-1},\str_{i-1})}$ for each $i\in\set{2,\ldots,n+1}$.       
    We say that $\nfa$ \emph{accepts $\str$ under the parallel semantics} if the last state set of the parallel run contains an accepting state.
    It can be shown that the parallel semantics is equivalent to the semantics of acceptance given earlier.
    \begin{comment}%%%%%%%
    We now argue the equivalence of the semantics.
    First, we observe that any accepting run $q_1,\ldots,q_{n+1}$ of $\nfa$ on $\str$ is embedded into $\pset_1,\ldots,\pset_{n+1}$, i.e., $q_i\in\pset_i$ for each $i\in\set{1,\ldots,n+1}$; this causes $\nfa$ to accept $\str$ under the parallel semantics too.
    
    For the converse direction, suppose $\nfa$ accepts $\str$ under the parallel semantics, i.e., for the parallel run $\pset_1,\ldots,\pset_{n+1}$ of $\nfa$ on $\str$ there is a state $q_{n+1}\in\pset_{n+1}\cap\acceptstates$. 
    Now, using the definition of $\pset_{n+1}$, there is a state $q_n\in\pset_n$ such that $q_{n+1}\in\trf(q_n,\str_n)$. 
    Using a similar reasoning, it is possible to select a complete sequence of states $q_1,\ldots,q_{n+1}$ where $q_i\in\pset_i$ for each $i\in\set{1,\ldots,n+1}$, implying $q_1=\startstate$, and such that $q_i\in\trf(q_{i-1},\str_{i-1})$ for each $i\in\set{2,\ldots,n+1}$. We see that $q_1,\ldots,q_{n+1}$ is an accepting run of $\nfa$ on $\str$.
    \end{comment}%%%%%%%
    Because non-deterministic automata explore multiple states simultaneously at runtime, they appear to be a natural model for understanding parallel computation in neural networks (see Section~\ref{sub:lower}).    
    \qed
\end{remark}

%-------------------------------------
\subsection{Behaviors}

We use behaviors to describe computations separate from neural networks.
Regarding notation, for a set $\jset$, let $\powset\jset$ denote the \emph{powerset} of $\jset$, i.e., the set of all subsets of $\jset$.

Let $\inr$ and $\onr$ be finite sets, whose elements we may think of as representing neurons. 
In particular, the elements of $\inr$ and $\onr$ are called \emph{input} and \emph{output} neurons respectively.
Now, a \emph{behavior $\be$ over input set $\inr$ and output set $\onr$} is a function that maps each nonempty string over alphabet $\powset\inr$ to a subset of $\onr$.
Regarding terminology, for a string $\str$ over $\powset\inr$ and an index $i\in\set{1,\ldots,\len\str}$, the symbol $\str_i$ says which input neurons are \emph{active} at (discrete) time $i$.
Note that multiple input neurons can be concurrently active.

For an input string $\str=\mstr{\str_1,\ldots,\str_n}$ over $\powset\inr$, the behavior $\be$ implicitly defines the following output string $\strB=\mstr{\strB_1,\ldots,\strB_{n+1}}$ over $\powset\onr$: 
\begin{itemize}
    \item $\strB_1=\emptyset$, and
    \item $\strB_i=\be(\prefix\str{i-1})$ for each $i\in\set{2,\ldots,n+1}$.
\end{itemize}
So, the behavior has access to the preceding input history when producing each output symbol. 
But an output symbol is never based on future input symbols.

%-------------------------------------
\subsection{Monotone-regular Behaviors}

Let $\inr$ be a set of input neurons.
We call a language $\lang$ over alphabet $\powset\inr$  \emph{founded} when each string of $\lang$ is nonempty and has a nonempty subset of $\inr$ for its first symbol.
Also, for two strings $\str$ and $\strB$ over $\powset\inr$, we say that $\str$ \emph{embeds} $\strB$ if $\str$ has a suffix $\strC$ with $\len\strC=\len\strB$ such that $\strB_i\subseteq \strC_i$ for each $i\in\set{1,\ldots,\len\strB}$.
Note that $\strB$ occurs at the \emph{end} of $\str$.
Also note that a string embeds itself according to this definition.

Let $\be$ be a behavior over an input set $\inr$ and an output set $\onr$.
We call $\be$ \emph{monotone-regular} if for each output neuron $x\in\onr$ there is a founded regular language $\nrl x$ such that for each nonempty input string $\str$ over $\powset\inr$,
\[
    x\in\be(\str) \Leftrightarrow \text{$\str$ embeds a string $\strB\in\nrl x$}.
\]
Intuitively, the regular language $\nrl x$ describes the patterns that output neuron $x$ reacts to.
So, the meaning of neuron $x$ is the recognition of language $\nrl x$.
We use the term \emph{monotone} to indicate that $\nrl x$ is recognized within surrounding superfluous activations of input neurons, through the notion of embedding.
The restriction to founded regular languages expresses that outputs do not emerge spontaneously, i.e, the activations of output neurons are given the opportunity to witness at least one activation of an input neuron.

\begin{remark}
    Let $\nfa$ be an automaton that recognizes a founded regular language over $\powset\inr$.
    When reading the symbol $\emptyset$ from the start state of $\nfa$, we may only enter states from which it is impossible to reach an accepting state; otherwise the recognized language is not founded. See also Lemma~\ref{lem:founded} in Section~\ref{sub:lower}.
    \qed
\end{remark}

\begin{remark}
    The definition of monotone-regular behaviors fuses the separate notions of monotonicity and (founded) regular languages. 
    It also seems possible to define monotone-regular behaviors as those behaviors that are both monotone and regular. 
    However, in the formalization of regular behaviors, the regular language of each output neuron $x$ likely has to describe the entire input strings upon which $x$ is activated (at the end).
    This is in contrast to the current formalization of monotone-regular behaviors, where the (founded) regular language $\nrl x$ could be very small, describing only the patterns that $x$ is really trying to recognize, even when those patterns are embedded in larger inputs. 
    The current formalization is therefore more insightful for our construction in the expressivity lower bound (Theorem~\ref{theo:lower}), where we convert an automaton for $\nrl x$ to a neural network that serves as a pattern recognizer for output neuron $x$.
    The current formalization of monotone-regular behaviors allows the pattern recognizer to be as small as possible.
    \qed
\end{remark}

%-------------------------------------
\subsection{Positive Neural Networks}
\label{sub:networks}

We define a neural network model that is related to previous discrete time models~\cite{sima_neuromata1998,sima_survey2003}, but with the following differences: 
    we have no inhibition, and we consider multiple input neurons that are allowed to be concurrently active.

Formally, a \emph{(positive) neural network} $\nw$ is a tuple $\nwtup$, where
\begin{itemize}
    \item $\inr$, $\onr$, and $\aux$ are finite and pairwise disjoint sets, containing respectively the \emph{input} neurons, the \emph{output} neurons, and the \emph{auxiliary} neurons;%
        \footnote{Auxiliary neurons are also sometimes called \emph{hidden} neurons~\cite{sima_survey2003}.}
    
    \item we let 
    \begin{align*}
        \conn\nw= & (\inr\times\onr)\cup(\inr\times\aux)\cup(\aux\times\onr)\\
            & \cup\set{(x,y)\in\aux\times\aux\mid x\neq y}
    \end{align*}
    be the set of possible connections; and,
    
    \item the function  $\weightinit$ is the \emph{weight function} that maps each $(x,y)\in\conn\nw$ to a value in $\interval 01$.
\end{itemize}
Note that there are direct connections from the input neurons to the output neurons.
The weight $0$ is used for representing missing connections.
Intuitively, the role of the auxiliary neurons is to provide working memory while processing input strings.    
For example, the activation of an auxiliary neuron could mean that a certain pattern was detected in the input string. 
Auxiliary neurons can recognize increasingly longer patterns by activating each other~\cite{elman_structure1990,kappel_markov2014}. 
%This way, a form of working memory emerges.
We refer to Section~\ref{sec:results} for constructions involving auxiliary neurons.

We introduce some notations for convenience. 
If $\nw$ is understood from the context, for each $x\in\inr\cup\onr\cup\aux$, we abbreviate 
\[
    \pre x=\set{y\in\inr\cup\aux \mid (y,x)\in\conn\nw\text{ and }\weightinit(y,x)> 0}
\]
and 
\[
    \post x=\set{y\in\onr\cup\aux \mid (x,y)\in\conn\nw\text{ and }\weightinit(x,y)> 0}.
\]
We call $\pre x$ the set of \emph{presynaptic} neurons of $x$, and $\post x$ the set of \emph{postsynaptic} neurons of $x$.

%--------------------------------------------
\subsubsection{Operational Semantics}
\label{sub:opsem}

Let $\nw=\nwtup$ be a neural network.
We formalize how $\nw$ processes an input string $\str$ over $\powset\inr$.
We start with the intuition.

\paragraph*{Intuition}    
    We do $\len\str$ steps, called \emph{transitions}, to process all symbols of $\str$.
    At each time $i\in\set{1,\ldots,\len\str}$, also referred to as transition $i$, we show the input symbol $\str_i$ to $\nw$.
    Specifically, an input neuron $x\in\inr$ is active at time $i$ if $x\in\str_i$.
    Input symbols could activate auxiliary and output neurons. Auxiliary neurons could in turn also activate other auxiliary neurons and output neurons.
    Each time a neuron of $\inr\cup\aux$ becomes active, it conceptually emits a signal.    
    The signal emitted by a neuron $x$ at time $i$ travels to all postsynaptic neurons $y$ of $x$, and such received signals are processed by $y$ at the next time $i+1$.
    Each signal that is emitted by $x$ and received by a postsynaptic neuron $y$ has an associated weight, namely, the weight on the connection from $x$ to $y$.
    Subsequently, a postsynaptic neuron $y$ emits a (next) signal if the sum of all received signal weights is larger than or equal to a firing threshold. 
    The firing threshold in our model is $1$ for all neurons.
    All received signals are immediately discarded when proceeding to the next time.
    In the formalization below, the conceptual signals are not explicitly represented, and instead the transitions directly update sets of activated neurons.
    
%The above description is formalized below.
    
%- - - - - - - - - - - - - - - - - - - - - - - - -
\paragraph*{Transitions}
A \emph{transition of $\nw$} is a triple $(\activeX i,\symm,\activeX j)$ where
    $\activeX i\subseteq\onr\cup\aux$ and $\activeX j\subseteq\onr\cup\aux$ are two sets of \emph{activated} neurons, $\symm\in\powset\inr$ is an input symbol,    
    and where 
    \[
        \activeX j=%
        \set{%
        y\in\onr\cup\aux\mid%
            \sum_{z\in\pre y\cap(\activeX i\cup\symm)} \weightinit(z,y)\geq 1%
        }.
    \]    
We call $\activeX i$ the \emph{source set}, $\activeX j$ the \emph{target set}, and $\symm$ the symbol that is \emph{read}.%
    \footnote{We include output neurons in transitions only for technical convenience. It is indeed not essential to include output neurons in the source and target sets, because output neurons have no postsynaptic neurons and their activation can be uniquely deduced from the activations of auxiliary neurons and input neurons.}

%- - - - - - - - - - - - - - - - - - - - - - - - -
\paragraph*{Run}
The \emph{run $\run$ of $\nw$ on input $\str$} is the unique sequence of $\len\str$ transitions for which
\begin{itemize}
    \item the transition with ordinal $i\in\set{1,\ldots,\len\str}$ reads input symbol $\str_i$;
    \item the source set of the first transition is $\emptyset$;
    \item the target set of each transition is the source set of the next transition.    
\end{itemize}
Note that $\run$ defines $\len\str+1$ sets of activated neurons, including the first source set.
We define the \emph{output of $\nw$ on $\str$}, denoted $\nw(\str)$, as the set $\activeX{}\cap\onr$ where $\activeX{}$ is the target set of the last transition in the run of $\nw$ on $\str$.

It is possible to consider the behavior $\be$ \emph{defined} by $\nw$: for each nonempty input string $\str$, we define $\be(\str)=\nw(\str)$.
So, like a behavior, a neural network implicitly transforms an input string $\str=\mstr{\str_1,\ldots,\str_n}$ over $\powset\inr$ to an output string $\strB=\mstr{\strB_1,\ldots,\strB_{n+1}}$ over $\powset\onr$:
\begin{itemize}
    \item $\strB_1=\emptyset$, and
    \item $\strB_i=\nw(\prefix\str{i-1})$ for each $i\in\set{2,\ldots,n+1}$.
\end{itemize}

%-----------------------------------
\subsubsection{Design Choices}
    
    We discuss the design choices of the formalization of positive neural networks. 
    Although the model is simple, we have some preferences in how to formalize it.
    
    First, the reason for not having connections from output neurons to auxiliary neurons is for simplicity, and so that proofs can more cleanly separate the roles of neurons.
    However, connections from output neurons to auxiliary neurons can be simulated in the current model by duplicating each output neuron as an auxiliary neuron, including its presynaptic weights. 
    %This way, the computation performed by the auxiliary neurons is visible to the auxiliary neurons.
    
    We exclude self-connections on neurons, i.e., connections from a neuron to itself, because such connections might be less common in biological neural networks. 
    %Because of this restriction, in the proof of Theorem~\ref{theo:lower}, the simulation of nondeterministic finite automata by positive neural networks might be slightly more biologically plausible.
    
    The connection weights are restricted to the interval $\interval 01$ to express that there is a maximal strength by which any two neurons can be connected.    
    In biological neural networks, the weight contributed by a single connection, which abstracts a set of synapses, is usually much smaller than the firing threshold~\cite{gerstner_book2014}.
    For technical simplicity (cf.\ Section~\ref{sec:results}), however, the weights in our model are relatively large compared to the firing threshold.%
        \footnote{The largest weight is $1$, which is equal to the firing threshold; so, a neuron could in principle become activated when only one of its presynaptic neurons is active.}
    Intuitively, such larger weights represent a hidden assembly of multiple neurons that become active concurrently, causing the resulting sum of emitted weights to be large~\cite{maass_lowerbounds1996}.
    
    We use a normalized firing threshold of $1$ for simplicity. Another choice of positive firing threshold could in principle be compensated for by allowing connection weights larger than $1$.    

%-----------------------------------
\subsection{Implementing Behaviors, with Delay}

Let $\nw=\nwtup$ be a neural network.
We say that a behavior $\be$ is \emph{compatible} with $\nw$ if $\be$ is over input set $\inr$ and output set $\onr$.

Delay is a standard notion in the expressivity study of neural networks~\cite{sima_neuromata1998,sima_survey2003}.
We say that $\nw$ \emph{implements a compatible behavior $\be$ with delay $k\in\nat$} when for each input string $\str$ over $\powset\inr$,
\begin{itemize}
    \item if $\len\str\leq k$ then $\nw(\str)=\emptyset$;%
        \footnote{If $k=0$ then this condition is immediately true because we consider no input strings with length zero.}
    and,
    \item if $\len\str > k$ then $\nw(\str) = \be(\prefix\str m)$ where $m=\len\str-k$.
\end{itemize}
Intuitively, delay is the amount of additional time steps that $\nw$ needs before it can conform to the behavior. This additional time is provided by reading more input symbols.%
    \footnote{Suppose $\nw$ implements $\be$ with delay $k$. Let $\str$ be an input string with $\len\str> k$. If we consider $\str$ as the entire input to the network $\nw$, then the last $k$ input symbols of $\str$ may be arbitrary; those symbols only provide additional time steps for $\nw$ to compute $\be(\prefix\str m)$ where $m=\len\str-k$.}
Note that a zero delay implementation corresponds to $\nw(\str)=\be(\str)$ for all input strings $\str$.

Letting $\be$ be the behavior defined by $\nw$, note that $\nw$ implements $\be$ with zero delay.

\begin{remark}
    \simaWcitet{} show that a neural network recognizing a regular language with delay $k$ over a single input neuron can be transformed into a (larger) neural network that recognizes the same language with delay $1$.
    An assumption in the construction, is that the delay $k$ in the original network is caused by paths of length $k$ from the input neuron to output neurons.    
    
    The definition of delay in this article is purely semantical: we only look at the timing of output neurons. There could be delay on output neurons, even though there might be direct connections from input neurons to output neurons, because output neurons might cooperate with auxiliary neurons (which might introduce delays).
    
    For completeness, we note that our construction in the expressivity lower bound (Theorem~\ref{theo:lower}) does not create direct connections from input neurons to output neurons, and thereby incurs a delay of a least one time unit; but we show that it is actually a delay of precisely one time unit. This construction therefore resembles the syntactical assumption by \citet{sima_neuromata1998}.
    \qed
\end{remark}

%========================================

\section{Expressivity Results}
\label{sec:results}

Our goal is to better understand what positive neural networks can do.
Within the discrete-time framework of monotone-regular behaviors, we propose an upper bound on expressivity in Section~\ref{sub:upper}; a lower bound on expressivity in Section~\ref{sub:lower}; and, in Section~\ref{sub:separation}, examples showing that these bounds do not coincide.
This separation arises because our analysis takes into account the delay by which a neural network implements a monotone-regular behavior. 
It turns out that an implementation of zero delay exists for some monotone-regular behaviors, but not for other monotone-regular behaviors.
A delay of one time unit is sufficient for implementing all monotone-regular behaviors.
As an additional result, we present in Section~\ref{sub:also-zero} a large class of monotone-regular behaviors that can be implemented with zero delay.
If we would ignore delay, however, our upper and lower bound results (Sections~\ref{sub:upper} and \ref{sub:lower} respectively) intuitively say that the class of positive neural networks captures the class of monotone-regular behaviors: the behavior defined by a positive neural network is monotone-regular, and each monotone-regular behavior can be implemented by a positive neural network.

%--------------------------------------
\subsection{Upper Bound}
\label{sub:upper}

Our expressivity upper bound says that only monotone-regular behaviors can be expressed by positive neural networks. This result is in line with the result by \simaWcitet{}, with the difference that we now work with multiple input neurons and the notion of monotonicity.

\begin{theorem}
    \label{theo:upper}
    The behaviors defined by positive neural networks are monotone-regular.
\end{theorem}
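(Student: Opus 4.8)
The plan is to show that every behavior $\be$ defined by a positive neural network $\nw=\nwtup$ satisfies the definition of monotone-regular, i.e., to produce for each output neuron $x\in\onr$ a founded regular language $\nrl x$ such that $x\in\be(\str)$ if and only if $\str$ embeds some string of $\nrl x$. There are two things to establish: (i) monotonicity, namely that enriching an input symbol with extra active input neurons can never deactivate an output neuron; and (ii) regularity, namely that the set of input strings activating $x$ is controlled by a finite-state process, so that the relevant pattern language is regular.

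First I would establish monotonicity directly from the transition rule. The key observation is that the target set is computed by thresholding sums of \emph{nonnegative} weights over $\pre y\cap(\activeX i\cup\symm)$. Hence if I enlarge either the source set or the input symbol (replacing $\symm$ by some $\symm'\supseteq\symm$, or $\activeX i$ by $\activeX i'\supseteq\activeX i$), each such sum can only increase, so the resulting target set can only grow. By induction over the transitions of a run, a pointwise-larger input string (one whose symbols are supersets) yields pointwise-larger activation sets at every time, and in particular a larger output set. This monotonicity is exactly what makes the ``embeds'' formulation appropriate: $x$ firing depends only on some pattern being present, not on extra activity being absent.

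Second I would extract a finite automaton from $\nw$. The natural idea is to take automaton states to be the global states of the network, i.e., subsets of $\onr\cup\aux$; these form a finite set since $\onr\cup\aux$ is finite. The transition rule of $\nw$, read as a map $(\activeX i,\symm)\mapsto\activeX j$, is deterministic and thus gives a transition function over alphabet $\powset\inr$, with start state $\emptyset$. For a fixed output neuron $x$, I would declare accepting those global states containing $x$; the language so recognized is precisely the set of input strings $\str$ for which $x\in\nw(\str)=\be(\str)$, and it is regular. The remaining work is to convert this ``whole-string'' regular language into a \emph{founded} regular language $\nrl x$ describing just the minimal patterns, and to verify the embedding equivalence: because of monotonicity, $x\in\be(\str)$ should hold exactly when $\str$ embeds a \emph{minimal} activating string, and the set of minimal activating strings (or a founded language generating the same embedding-closure) must be shown regular. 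Foundedness would follow by discarding strings whose activation could be triggered with an empty or redundant leading symbol.

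The main obstacle I anticipate is the passage from the raw ``whole-string'' regular language of activating inputs to a founded regular language whose \emph{embedding-closure} coincides with it. Monotonicity guarantees the activating language is upward-closed under the pointwise-superset/suffix ordering implicit in \emph{embeds}, but I must argue that such an upward-closed regular language is exactly the set of strings embedding some member of a founded regular ``pattern'' language. The cleanest route is probably to define $\nrl x$ as a suitable regular subset (e.g.\ via a product or quotient construction on the global-state automaton that tracks the relevant suffix while ignoring superfluous activations), then verify both directions of the embedding equivalence using monotonicity for the ``if'' direction and minimality/foundedness for the ``only if'' direction. Closure properties of regular languages (intersection, projection, and the regularity of the embedding-closure operation on a regular language) should make each regularity claim routine once the construction is pinned down.
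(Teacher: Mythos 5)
Your automaton construction and your monotonicity lemma coincide with the paper's: automaton states are the global activation sets from $\powset{\onr\cup\aux}$ (the paper adds a fresh start state $\startstate$ and a halt state $\haltstate$ that absorbs any run whose first symbol is $\emptyset$, which is all that foundedness requires), transitions are the deterministic network update rule, and the accepting states are those containing $x$. The genuine gap is in your final step. You treat the passage from the ``whole-string'' activating language to a pattern language as the main remaining obstacle, proposing to extract \emph{minimal} activating strings and to prove regularity of embedding-closures or of the set of minimal elements. None of that is needed, and as written your proof is incomplete precisely there. The definition of monotone-regular does not ask $\nrl x$ to consist of minimal patterns, and a string embeds itself (and embeds each of its suffixes). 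The paper simply takes $\nrl x$ to be the whole-string activating language restricted to strings whose first symbol is nonempty, and verifies the equivalence directly; its footnote even notes that this $\nrl x$ contains extensions of the ``real'' patterns with arbitrary prefixes. Forward direction: if $x\in\be(\str)$, let $\strB$ be the suffix of $\str$ starting at its first nonempty symbol; since from the empty activation set an empty symbol activates nothing (every threshold sum is $0<1$), $\nw(\strB)=\nw(\str)$, so $\strB\in\nrl x$ and $\str$ trivially embeds $\strB$. Backward direction: if $\str$ embeds $\strB\in\nrl x$, the monotonicity you proved in part (i) --- extra input activity, and a possibly nonempty activation set at the moment the dominated suffix begins, only enlarge all subsequent activation sets --- yields $x\in\nw(\str)$. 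So your perceived obstacle dissolves once you notice the equivalence holds for the unminimized founded language.

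A caution on your fallback route, in case you pursue it: the embedding order of this paper is not the scattered-subword order, so Higman's lemma does not apply; for instance the strings $(\set a,\emptyset^k,\set a)$ for $k=1,2,\ldots$ form an infinite antichain under aligned-suffix domination. The order is still well-founded, so every activating string does embed a minimal activating one, and the minimal set plausibly is regular (the embedding relation is recognizable by a right-aligned product automaton), but establishing that regularity is the hardest part of your route and you leave it at ``should be routine.'' Likewise your treatment of foundedness (``discarding strings whose activation could be triggered with an empty or redundant leading symbol'') is vague where the paper's halt-state device --- equivalently, intersecting with the regular language of strings beginning with a nonempty symbol --- settles it in one line.
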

\begin{proof}    
    Intuitively, because a positive neural network only has a finite number of subsets of auxiliary neurons to form its memory, the network behaves like a finite automaton. Hence, as is well-known, the performed computation can be described by a regular language~\cite{sima_neuromata1998}. 
    An interesting novel aspect, however, is monotonicity, meaning that output neurons recognize patterns even when those patterns are embedded into larger inputs.
    
    Let $\nw=\nwtup$ be a positive neural network.    
    Let $\be$ denote the behavior defined by $\nw$.
    We show that $\be$ is monotone-regular.
    Fix some $x\in\onr$.
    We define a founded regular language $\nrl x$ such that for each input string $\str$ over $\powset\inr$ we have
    \[
        x\in\be(\str) \Leftrightarrow \text{$\str$ embeds a string $\strB\in\nrl x$}.
    \]  
    We first define a deterministic automaton $\nfa$.
    Let $\startstate$ and $\haltstate$ be two state symbols where 
            $\startstate\neq\haltstate$ and
            $\set{\startstate,\haltstate}\cap\powset{\onr\cup\aux}=\emptyset$.
    We call $\haltstate$ the \emph{halt state} because no useful processing will be performed anymore when $\nfa$ gets into state $\haltstate$ (see below).        
    We concretely define $\nfa=\nfatup$, where
    \begin{itemize}        
        \item $\states=\set{\startstate,\haltstate}\cup\powset{\onr\cup\aux}$;
        
        \item $\alp=\powset\inr$;        
       
        \item regarding $\trf$, for each $(q,\symm)\in\states\times\alp$,
        \begin{itemize}
            \item if $q=\startstate$ and $\symm=\emptyset$ then 
                $\trf(q,\symm)=\set{\haltstate}$;
                        
            \item if $q=\startstate$ and $\symm\neq\emptyset$ then
                $\trf(q,\symm)=\set{q'}$ where
                \[
                     q' = \set{y\in\onr\cup\aux\mid %
                                \sum_{z\in\pre y\cap \symm} \weightinit(z,y) \geq 1% 
                            };
                \]
                                             
            \item if $q=\haltstate$ then
                $\trf(q,\symm)=\set{\haltstate}$;
               
            \item if $q\in\powset{\onr\cup\aux}$ then $\trf(q,\symm) = \set{q'}$ where
            \[
                q' = \set{y\in\onr\cup\aux\mid %
                            \sum_{z\in\pre y\cap(q\cup \symm)} \weightinit(z,y) \geq 1% 
                        };
            \]            
        \end{itemize}            
             
        \item $\acceptstates = \set{q\in\powset{\onr\cup\aux}\mid x\in q}$.
    \end{itemize}
    The addition of state $\haltstate$ is to obtain a founded regular language: strings accepted by $\nfa$ start with a nonempty input symbol.       
    We define $\nrl x$ as the founded regular language recognized by $\nfa$.%
         \footnote{The construction in this proof does not necessarily result in the smallest founded regular language $\nrl x$. The activation of $x$ is based on seeing patterns embedded in a suffix of the input, but our construction also includes strings in $\nrl x$ that are extensions of such patterns with arbitrary prefixes (starting with a nonempty input symbol).}    
    
    Next, let $\str=\mstr{\sym 1,\ldots,\sym n}$ be a string over $\powset\inr$.
    We show that
    \[
        x\in\be(\str) \Leftrightarrow \text{$\str$ embeds a string $\strB\in\nrl x$.}        
    \]
    
    \paragraph*{Direction 1}
    Suppose $x\in\be(\str)$.
    Because no neurons are activated on empty symbols, we can consider the smallest index $k\in\set{1,\ldots,n}$ with $\sym k\neq\emptyset$.
    Let $\strB=\mstr{\sym k,\ldots,\sym n}$. 
    Clearly $\str$ embeds $\strB$.
    Note that $\nw(\strB)=\nw(\str)$, implying $x\in\nw(\strB)$.
    When giving $\strB$ as input to automaton $\nfa$, we do not enter state $\haltstate$ since $\strB$ starts with a nonempty input symbol.
    Subsequently, $\nfa$ faithfully simulates the activated neurons of $\nw$. The last state $q$ of $\nfa$ reached in this way, corresponds to the last set of activated neurons of $\nw$ on $\strB$. Since $x\in\nw(\strB)$, we have $q\in\acceptstates$, causing $\strB\in\nrl x$, as desired.

    \paragraph*{Direction 2}
    Suppose $\str$ embeds a string $\strB\in\nrl x$.
    Because $\strB\in\nrl x$, there is an accepting run of $\nfa$ on $\strB$, where the last state is an element $q\in\powset{\onr\cup\aux}$ with $x\in q$. 
    Since $\nfa$ faithfully simulates $\nw$, we have $x\in\nw(\strB)$.
    Because the connection weights of $\nw$ are nonnegative, if we would extend $\strB$ with more activations of input neurons both before and during $\strB$, like $\str$ does, then at least the neurons would be activated that were activated on just $\strB$. Hence, $x\in\nw(\str)$, as desired.
    
    \paragraph*{Remark}
    We did not define $\states=\onr\cup\aux$ because, when reading an input symbol, the activation of a neuron depends in general on multiple presynaptic auxiliary neurons. That context information might be lost when directly casting neurons as automaton states, because an automaton state is already reached by combining just one predecessor state with a new input symbol.
\end{proof}

The following example demonstrates that an implementation with zero delay is at least achievable for some simple monotone-regular behaviors.
%In Section~\ref{sub:separation}, we also discuss examples that can not be implemented with zero delay.
In Section~\ref{sub:also-zero} we will also see more advanced monotone-regular behaviors that can be implemented with zero delay.
\begin{example}
    \label{ex:chain}
    Let $\be$ be a monotone-regular behavior over an input set $\inr$ and an output set $\onr$ with the following assumption: for each $x\in\onr$, the founded regular language $\nrl x$ contains just one string.
    The intuition for $\be$, is that a simple chain of auxiliary neurons suffices to recognize increasingly larger prefixes of the single string, and the output neuron listens to the last auxiliary neuron and the last input symbol. There is no delay.
    
    We now define a positive neural network $\nw=\nwtup$ to implement $\be$ with zero delay. For simplicity we assume $\ssize\onr=1$, and we denote $\onr=\set{x}$; we can repeat the construction below in case of multiple output neurons, and the partial results thus obtained can be placed into one network.    
    Denote $\nrl x=\set{\mstr{\sym 1,\ldots,\sym n}}$, where $\sym 1\neq\emptyset$.
    If $n=1$ then we define $\aux=\emptyset$ and, letting $m=\ssize{\sym 1}$, we define $\weightinit(u,x)=\myfrac 1m$ for each $u\in\sym 1$; all other weights are set to zero.
    We can observe that $\nw(\str)=\be(\str)$ for each input string $\str$ over $\powset\inr$.
    
    Now assume $n\geq 2$. We define $\aux$ to consist of the pairwise different neurons $y_1,\ldots,y_{n-1}$, with the assumption $x\notin\aux$.
    Intuitively, neuron $y_1$ should detect symbol $\sym 1$. Next, for each $i\in\set{2,\ldots,n-1}$, neuron $y_i$ is responsible for detecting symbol $\sym i$ when the prefix $\mstr{\sym 1,\ldots, \sym{i-1}}$ is already recognized; this is accomplished by letting $y_i$ also listen to $y_{i-1}$. 
    We specify weight function $\weightinit$ as follows, where any unspecified weights are assumed to be zero:
    \begin{itemize}
        \item For neuron $y_1$, letting $m=\ssize{\sym 1}$, we define $\weightinit(u,y_1)=\myfrac 1m$ for each $u\in\sym 1$;
        
        \item For neuron $y_i$ with $i\in\set{2,\ldots,n-1}$, letting $m=\ssize{\sym i}+1$, we define $\weightinit(u,y_i)=\myfrac 1m$ for each $u\in\set{y_{i-1}}\cup\sym i$;
        
        \item For neuron $x$, letting $m=\ssize{\sym n}+1$, we define $\weightinit(u,x)=\myfrac 1m$ for each $u\in\set{y_{n-1}}\cup\sym n$.
    \end{itemize}
    Also for the case $n\geq 2$, we can observe that $\nw(\str)=\be(\str)$ for each input string $\str$ over $\powset\inr$.
    \qed
\end{example}

%--------------------------------------
\subsection{Lower Bound}
\label{sub:lower}

The expressivity lower bound (Theorem~\ref{theo:lower}~below) complements the expressivity upper bound (Theorem~\ref{theo:upper}).
%The technical construction of the lower bound result might also provide some intuition about individual neurons.
We first introduce some additional terminology and definitions.
\subsubsection{\Suitable\ Automata}

The construction in the expressivity lower bound is based on translating automata to neural networks. 
The Lemmas below allow us to make certain technical assumptions on these automata, making the translation to neural networks more natural.

We say that an automaton $\nfa=\nfatup$ contains a \emph{self-loop} if there is a pair $(q,\symm)\in\states\times\alp$ such that $q\in\trf(q,\symm)$.
The following Lemma tells us that self-loops can be removed:
\begin{lemma}
    \label{lem:selfloops}
    Every regular language recognized by an automaton $\nfaX 1$ is also recognized by an automaton $\nfaX 2$ that \romI\ contains no self-loops, and \romII\ uses at most double the number of states of $\nfaX 1$.%
        \footnote{Intuitively, the quantification of the number of states indicates that in general $\nfaX 2$ preserves the nondeterminism of $\nfaX 1$.}
\end{lemma}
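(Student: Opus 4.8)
The plan is to replace each state $q$ with two copies, $q$ and $q'$, so that a self-loop $q \in \trf(q,\symm)$ can be simulated by a ping-pong between the two copies. The intuition is standard: a self-loop lets the automaton stay in $q$ for an arbitrary number of consecutive steps reading symbols that trigger the loop; by alternating between $q$ and $q'$ on each such step, we reproduce exactly the same set of reachable symbol sequences without any single state transitioning to itself.

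Concretely, I would build $\nfaX 2$ with state set $\statesX 2 = \statesX 1 \times \set{0,1}$, writing $(q,0)$ and $(q,1)$ for the two copies of a state $q \in \statesX 1$. The start state is $(\startstateX 1, 0)$, and a state $(q,b)$ is accepting iff $q \in \acceptstatesX 1$ (both copies inherit acceptance). For the transition function, given $(q,b)$ and a symbol $\symm$, and given some $r \in \trf_1(q,\symm)$, I would route the transition as follows: if $r \neq q$ (a genuine move to a different state), send it to $(r,0)$; if $r = q$ (the self-loop), send it to the \emph{opposite} copy, i.e.\ to $(q, 1-b)$. This guarantees that no transition in $\nfaX 2$ maps a state to itself: a non-self-loop lands in a different underlying state, and a former self-loop lands in the sibling copy, which is a distinct state. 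The number of states is exactly $2\,\ssize{\statesX 1}$, satisfying \romII.

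The main work is the correctness argument \romI, that $\nfaX 2$ recognizes the same language as $\nfaX 1$. I would prove both inclusions by relating runs. For any run $q_1,\ldots,q_{n+1}$ of $\nfaX 1$ on a string $\str$, I construct a corresponding run of $\nfaX 2$ by tracking the parity bit, flipping it exactly on the steps where $\nfaX 1$ took a self-loop and resetting it to $0$ on steps entering a fresh state; the underlying states match, so acceptance is preserved. Conversely, projecting any run of $\nfaX 2$ onto its first coordinate yields a valid run of $\nfaX 1$, because every $\nfaX 2$-transition was, by construction, derived from a legal $\nfaX 1$-transition (either a move $q \to r$ with $r \neq q$, or a self-loop $q \to q$). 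Acceptance again transfers through the projection since a copy $(q,b)$ is accepting precisely when $q$ is.

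The step I expect to require the most care is the bookkeeping of the parity bit in the forward direction: I must confirm that after flipping on a self-loop step, the \emph{next} transition out of the new copy is still available — that is, the transition function must be defined identically on both copies $(q,0)$ and $(q,1)$ for every symbol. Since I define $\trf_2$ to depend on the underlying state $q$ and to treat the two copies symmetrically (a self-loop always flips, a real move always resets to $0$), this holds by construction, but it is the point where an asymmetric definition would silently break the simulation. A minor subtlety worth noting is that the construction preserves nondeterminism: if $\trf_1(q,\symm)$ contains both $q$ itself and some other state $r$, both branches survive in $\nfaX 2$ (one to $(q,1-b)$, one to $(r,0)$), which is what the footnote about preserving nondeterminism anticipates.
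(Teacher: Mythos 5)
Your proof is correct and takes essentially the same approach as the paper: both constructions simulate a self-loop by alternating (``ping-ponging'') between a state and a duplicate copy, so that repeated reading of the looping symbol uses two distinct states, and correctness is argued by the same run correspondence in both directions. The only cosmetic difference is that you duplicate every state via a parity bit, yielding exactly $2\ssize{\statesX 1}$ states, whereas the paper duplicates only the states actually involved in self-loops --- both variants satisfy the ``at most double'' bound of \romII.
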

\begin{proof}
    Denote $\nfaX 1=\nfatupX 1$.
    The idea is to duplicate each state involved in a self-loop, so that looping over the same symbol is still possible but now uses two states.
    Let $\loops$ be the set of all states of $\nfaX 1$ involved in a self-loop:
    \[
    \loops = \set{q\in\statesX 1\mid \exists\symm\in\alpX 1 \text{ with }q\in\trfX 1(q,\symm)}.
    \]
    Let $f$ be an injective function that maps each $q\in\loops$ to a new state $f(q)$ outside $\statesX 1$.
    To construct $\nfaX 2$, we use the state set $\statesX 1 \cup \set{f(q)\mid q\in\loops}$; the same start state as $\nfaX 1$; and, the accepting state set $\acceptstatesX 1 \cup \set{f(q)\mid q\in\acceptstatesX 1\cap\loops}$. 
    For the new transition function, each pair $(q,\symm)\in\statesX 1\times\alpX 1$ with $q\in\trfX 1(q,\symm)$ is mapped to $\set{f(q)}\cup(\trfX 1(q,\symm)\setminus\set{q})$, and $(f(q),\symm')$ is mapped to $\trfX 1(q,\symm')$ for each $\symm'\in\alpX 1$.%
        \footnote{If $q\in\trfX 1(q,\symm')$ then we can go back from the new state $f(q)$ to the old state $q$ by reading symbol $\symm'$.}
    An odd number of repetitions over symbol $\symm$ is possible because we have copied all outgoing transitions of $q$ to $f(q)$.    
    All other pairs $(q,\symm)\in\statesX 1\times\alpX 1$ with $q\notin\trfX 1(q,\symm)$ are mapped as before.
\end{proof}

For founded regular languages, Lemma~\ref{lem:founded} (below), tells us that the symbol $\emptyset$ does not have to be read from the start state.
Intuitively, this last assumption means that activated states of an automaton can be simulated by neurons: the activations of input neurons in the first input symbol can be propagated through the neural network to keep track of any further progress, even if subsequent input symbols are empty.
\begin{lemma}
    \label{lem:founded}
    Letting $\inr$ be an input set, every founded regular language over $\powset\inr$ recognized by an automaton $\nfaX 1$ is also recognized by an automaton $\nfaX 2=\nfatupX 2$ where \romI\ $\trfX 2(\startstateX 2,\emptyset)=\emptyset$, and \romII\ $\nfaX 2$ has the same states as $\nfaX 1$.       
\end{lemma}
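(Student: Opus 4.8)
The plan is to build $\nfaX 2$ from $\nfaX 1$ by the smallest possible edit. I keep the same states, alphabet, start state ($\startstateX 2=\startstateX 1$) and accepting states, and I copy the transition function verbatim except at one value: I override $\trfX 2(\startstateX 2,\emptyset)=\emptyset$ while leaving $\trfX 2(q,\symm)=\trfX 1(q,\symm)$ for every other pair $(q,\symm)$. This makes \romI\ and \romII\ hold by construction, so all that remains is to check that $\nfaX 2$ still recognizes the same (founded) language, i.e.\ $\nrl{\nfaX 2}=\nrl{\nfaX 1}$.

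One inclusion is free. Since every transition set of $\nfaX 2$ is contained in the corresponding one of $\nfaX 1$ (I only deleted transitions out of $\startstateX 2$ on $\emptyset$), any accepting run of $\nfaX 2$ is verbatim an accepting run of $\nfaX 1$, giving $\nrl{\nfaX 2}\subseteq\nrl{\nfaX 1}$. The content of the lemma is the reverse inclusion, and I would isolate the following claim as its core: in any accepting run of $\nfaX 1$ on a string of $\nrl{\nfaX 1}$, the start state $\startstateX 1$ is never the source of an $\emptyset$-transition.

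Granting the claim, fix $\str\in\nrl{\nfaX 1}$ with accepting run $q_1,\ldots,q_{\len\str+1}$ of $\nfaX 1$. By the claim, every step of this run that leaves $\startstateX 1$ reads a nonempty symbol, so it uses none of the deleted transitions; every step leaving a state other than $\startstateX 1$ is untouched. Hence the identical state sequence is an accepting run of $\nfaX 2$ on $\str$, so $\str\in\nrl{\nfaX 2}$ and the reverse inclusion follows. To prove the claim, suppose for contradiction that $q_i=\startstateX 1$ and $\str_i=\emptyset$ for some position $i\in\set{1,\ldots,\len\str}$. Consider the suffix $\strB=\mstr{\str_i,\ldots,\str_{\len\str}}$: it is nonempty and its first symbol is $\emptyset$. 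The tail $q_i,\ldots,q_{\len\str+1}$ starts in $\startstateX 1=q_i$, respects $\trfX 1$ on $\strB$, and ends in the accepting state $q_{\len\str+1}$, so it is an accepting run of $\nfaX 1$ on $\strB$; hence $\strB\in\nrl{\nfaX 1}$. But $\strB$ is a nonempty string whose first symbol is not a nonempty subset of $\inr$, contradicting foundedness of $\nrl{\nfaX 1}$.

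The main obstacle to keep in mind is that the problematic $\emptyset$-transition out of the start state need not occur at the first position: the run might return to $\startstateX 1$ later and read $\emptyset$ there. Because the lemma forbids introducing a fresh start state (it insists on the same state set), one cannot sidestep the issue by rerouting only the initial move. The suffix argument handles exactly these later revisits, and it is precisely foundedness---applied to the suffix of $\str$ beginning at such a revisit---that collapses this potential difficulty into a one-line contradiction.
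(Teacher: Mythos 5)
Your proposal is correct and takes essentially the same route as the paper: the identical minimal edit (override only $\trfX 2(\startstateX 2,\emptyset)=\emptyset$), the free inclusion from transition-set containment, and the reverse inclusion via the same suffix-plus-foundedness contradiction at any position where the run sits in the start state and reads $\emptyset$. Your explicit isolation of the ``never an $\emptyset$-transition out of the start state'' claim, and your remark that revisits of the start state are exactly what the suffix argument handles, merely make explicit what the paper's proof does implicitly.
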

\begin{proof}
    The automaton $\nfaX 2$ is almost exactly the same as $\nfaX 1$, except that the state-symbol combination $(\startstateX 2,\emptyset)$ is mapped by the transition function to $\emptyset$, i.e., it is impossible to read the empty symbol from the start state. 
    We can immediately see that all accepting runs of $\nfaX 2$ are also accepting runs of $\nfaX 1$ because $\nfaX 1$ includes all transition possibilities of $\nfaX 2$.
    
    For the other direction, towards a contradiction, suppose there is an accepting run $q_1,\ldots,q_{n+1}$ of $\nfaX 1$ on a string $\str=\mstr{\sym 1,\ldots,\sym n}$ but this run is not an accepting run of $\nfaX 2$.
    Because in $\nfaX 2$ we have only removed the option to read symbol $\emptyset$ from the start state, there has to be some $i\in\set{1,\ldots,n}$ with $\sym i=\emptyset$ and $q_i$ is the start state (of $\nfaX 1$, and $\nfaX 2$).
    Now, note that the state sequence $q_i,\ldots,q_{n+1}$ is an accepting run of $\nfaX 1$ on the suffix $\strB=\mstr{\sym i,\ldots,\sym n}$.
    But since $\sym i=\emptyset$, automaton $\nfaX 1$ would not recognize a founded regular language, which is a contradiction.
\end{proof}

Let $\nfa$ be as above. A state $q\in\states$ is said to be \emph{\useful} if there is string $\str$ over $\alp$ and a run of $\nfa$ on $\str$ in which $q$ appears; this run does not have to be accepting. 
Clearly, every regular language recognized by an automaton $\nfaX 1$ is also recognized by an automaton $\nfaX 2$ that keeps only the \useful\ states of $\nfaX 1$.

Letting $\inr$ be an input set, and letting $\nfa$ be an automaton that recognizes a founded regular language over $\powset\inr$, we call $\nfa$ \emph{\suitable} if 
\begin{itemize}
    \item $\nfa$ contains no self-loops;    
    \item $\nfa$ does not read symbol $\emptyset$ from its start state; and,
    \item $\nfa$ contains only \useful\ states.
\end{itemize}
By applying Lemmas~\ref{lem:selfloops} and \ref{lem:founded} in order, any automaton recognizing a founded regular language can be converted to a \suitable\ one that recognizes the same language; and, the number of states is at most doubled compared to the original automaton (through Lemma~\ref{lem:selfloops}).

For a \suitable\ automaton $\nfa=\nfatup$, we define the \emph{pair set} of $\nfa$, denoted $\pairset\nfa$, as the following set
\[
    \set{(q,\symm)\in\states\times\alp\mid
    q\neq\startstate\text{ and }
    \exists q'\in\states\text{ with }q\in\trf(q',\symm)}.
\] 
In words: the pair set contains the combinations in $\nfa$ of a non-start state and an incoming symbol to that state.

Now, let $\be$ be a monotone-regular behavior over an input set $\inr$ and an output set $\onr$.
An \emph{\implementation} for $\be$ is a function $\nfaref$ mapping each $x\in\onr$ to a \suitable\ automaton $\nfaref(x)$ that recognizes a founded regular language $\nrl x$ over $\powset\inr$ such that for each input string $\str$ over $\powset\inr$,
\[
    \monoregexpr x.
\] 
Intuitively, an \implementation\ for $\be$ is a prototype implementation that can later be converted to a neural network.
The \emph{total pair count} of $\nfaref$, denoted $\paircount\nfaref$, is defined as
\[
    \paircount\nfaref = \sum_{x\in\onr}\ssize{\pairset{\nfaref(x)}}.
\]

%- - - - - - - - - - - - - - - - - - - - - -
\subsubsection{Lower Bound Result}

\begin{theorem}
    \label{theo:lower}
    Every monotone-regular behavior $\be$ can be implemented by a positive neural network with delay $1$.
    In particular, each \implementation\ $\nfaref$ for $\be$ can be converted to a positive neural network that implements $\be$ with delay $1$ and that has $\paircount\nfaref$ auxiliary neurons.%
        \footnote{Intuitively, the number of auxiliary neurons indicates that in general the constructed neural network preserves the nondeterminism, and thus the parallelism, of the automata in $\nfaref$.}
\end{theorem}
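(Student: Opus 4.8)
The plan is to simulate each automaton $\nfaref(x)$ by neurons, creating one auxiliary neuron per element of its pair set and letting the output neuron $x$ read off the accepting states with a one-step delay. Fix $x\in\onr$ and write $\nfa=\nfaref(x)=\nfatup$. For every $(q,\symm)\in\pairset\nfa$ I introduce a single auxiliary neuron $y_{q,\symm}$; carrying this out for all $x\in\onr$ (with zero weights between the disjoint sub-networks) yields exactly $\paircount\nfaref$ auxiliary neurons, as required. The intended meaning of $y_{q,\symm}$ is ``the automaton can currently be in state $q$, having just entered $q$ by reading a pattern whose last symbol is $\symm$''. Since a monotone-regular behavior recognizes patterns embedded in a \emph{suffix} of the input, the simulation must allow a fresh run from $\startstate$ to begin at every time step; so the start state is kept permanently available and is represented implicitly by the empty source set rather than by a neuron. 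The output neuron $x$ receives weight $1$ from each $y_{q,\symm}$ with $q\in\acceptstates$ and weight $0$ from all input neurons, so $x$ fires exactly when some accepting-state neuron was active one step earlier, which is what produces the delay $1$. Foundedness forces $\startstate\notin\acceptstates$ (otherwise the empty string would be accepted), so every accepting state is a non-start, reachable state and therefore owns at least one neuron.

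The heart of the construction, and the step I expect to be the main obstacle, is designing the presynaptic weights of $y_{q,\symm}$ so that a single linear threshold encodes simultaneously an ``and'' over the input neurons in $\symm$ and an ``or'' over the context neurons representing predecessor states. The context neurons of $y_{q,\symm}$ are precisely the $y_{q',\symm'}$ with $q'\neq\startstate$ and $q\in\trf(q',\symm)$; because $\nfa$ has no self-loops we have $q'\neq q$, so these are genuinely distinct neurons and no self-connection is ever needed. I split into two cases. If $q\in\trf(\startstate,\symm)$ (which, by cleanness, forces $\symm\neq\emptyset$), a fresh run can reach $q$ whenever $\symm$ is present, so $y_{q,\symm}$ should fire iff $\symm$ is embedded in the current symbol; I set the weight of each $u\in\symm$ to $\frac{1}{\ssize\symm}$ and every context weight to $0$. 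If $q\notin\trf(\startstate,\symm)$, then $y_{q,\symm}$ should fire iff all of $\symm$ is present \emph{and} at least one context neuron is active: for $\symm\neq\emptyset$, writing $k=\ssize\symm$ and letting $C$ bound the number of auxiliary neurons (hence the number of simultaneously active contexts), I set each input weight to $\frac{1-\eta}{k}$ and each context weight to $\eta=\frac{1}{kC+1}$, whereas for $\symm=\emptyset$ I give each context weight $1$. A short calculation confirms the intended behavior: $k$ inputs plus one context reach the threshold $1$ exactly, while $k-1$ inputs together with all $C$ contexts yield $1-\frac{1}{kC+1}<1$; thus the neuron fires precisely on ``all of $\symm$ and some context,'' and all weights lie in $\interval 01$. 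The empty-symbol pairs thereby simply propagate context one step, consuming the current (arbitrary) input symbol.

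Finally, I would establish correctness by induction on the number of transitions. The invariant is that after reading $\prefix\str j$ (writing $\str=\mstr{\sym 1,\ldots,\sym n}$), the neuron $y_{q,\symm}$ is active iff there is a run of $\nfa$ from $\startstate$ reading symbols $\mu_1,\ldots,\mu_\ell$ with $\mu_i\subseteq\sym{j-\ell+i}$ and $\mu_\ell=\symm$ that ends in $q$ -- that is, iff a pattern ending in $q$ via $\symm$ is embedded at the end of $\prefix\str j$. The inductive step follows directly from the two weight cases: the first case realizes a fresh start at the current position, and the second case extends an existing run by one transition, using the induction hypothesis to locate the active context neuron, where cleanness again guarantees that the relevant predecessor is a non-start state that owns a neuron. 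Specializing the invariant to accepting states and combining it with the delayed read-off by $x$ gives, for $\len\str=n$, that $x\in\nw(\str)$ iff $\prefix\str{n-1}$ embeds some $\strB\in\nrl x$, i.e.\ $\nw(\str)=\be(\prefix\str m)$ with $m=\len\str-1$, while $\nw(\str)=\emptyset$ whenever $\len\str\le 1$. This is exactly an implementation of $\be$ with delay $1$ using $\paircount\nfaref$ auxiliary neurons.
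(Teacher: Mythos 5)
Your construction is essentially the paper's: the same pair-set auxiliary neurons with the trigger/context distinction, the same or-and threshold weight gadget (your $\eta=\myfrac{1}{(kC+1)}$ and $\myfrac{(1-\eta)}{k}$ are exactly the paper's $\worname$ and $\wandname$ with the global bound $C$ in place of the per-neuron context count $m$), the same weight-$1$ read-off of accepting-state neurons producing the delay of one time unit, and the same neuron count $\paircount\nfaref$. Your correctness argument via a single biconditional invariant on each auxiliary neuron is a slightly tidier packaging of the paper's two separate directions (in particular, your trigger case absorbs the paper's footnote trick of shortening the embedded string to avoid runs revisiting the start state), but it is the same proof in substance.
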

\begin{proof}            
    Let $\be$ be a monotone-regular behavior over an input set $\inr$ and an output set $\onr$.
    Let $\nfaref$ be an \implementation\ for $\be$.
    For each output neuron $x$, we translate automaton $\nfaref(x)$ to a neural network. 
    Roughly speaking, we translate state-symbol pairs of the automaton to neurons.
    A novel aspect, is that each input symbol in our model consists of multiple input neurons. For this reason, our simulation of an automaton state by a neuron uses a nontrivial definition of presynaptic weights allowing us to simultaneously express \romI\ an ``or'' over auxiliary neurons that provide working memory, and \romII\ an ``and'' over all input neurons mentioned in an input symbol.
    We use only rational weights.
    There is a delay of one time unit in the construction because the output neuron $x$ listens to neurons that simulate accept states of $\nfaref(x)$.%
        \footnote{An automaton itself does not introduce delay on string acceptance. In the construction of a neural network, however, all the different accept states should essentially be tunneled through a single output neuron. This requires in general a delay of one time unit (cf.\ Section~\ref{sub:separation}).}
    See also the later Remark~\ref{remark:preprocessor}.
    The construction below is illustrated in Example~\ref{ex:transform}.
    
    For simplicity, we assume $\ssize\onr=1$, and we denote $\onr=\set{x}$; for the case of multiple output neurons, the construction given below can be repeated, and the neural networks thus obtained can be united to form the overall desired network.   
    Let $\nfa=\nfaref(x)$ and denote $\nfa=\nfatup$ where $\alp=\powset\inr$.
    Recall that $\nfa$ is \suitable.
    
    %..........................................
    \paragraph*{Positive neural network}
    We now incrementally define the desired positive neural network $\nw=\nwtup$ to implement $\be$ with delay $1$.    
    
    %. . . . . . . . . . . . . . . . . . . . . . 
    \subparagraph*{Auxiliary neurons}
    First, we define the set of auxiliary neurons:
    \[
        \aux = \pairset\nfa,       
    \]
    where $\pairset\nfa$ is the pair set of $\nfa$ as defined above.
    Intuitively, an auxiliary neuron $(q,\symm)$, where always $q\neq\startstate$, represents the automaton state $q$ reached by reading input symbol $\symm$ from some previous state.    
    We define the set $\triggers\subseteq\aux$ of \emph{trigger neurons}:
    \[
        \triggers = \set{(q,\symm)\in\aux \mid q\in\trf(\startstate,\symm)}.
    \]
    Intuitively, the neurons in $\triggers$ are the first auxiliary neurons that become activated by the input; these neurons simulate the event of reading an input symbol from the start state of automaton $\nfa$.
    Note that for each $(q,\symm)\in\triggers$ we have $\symm\neq\emptyset$ because $\trf(\startstate,\emptyset)=\emptyset$ by assumption on $\nfa$.
    
    For each $(q,\symm)\in\aux\setminus\triggers$, we define the set $\context(q,\symm)$ of \emph{context} neurons of $(q,\symm)$ as follows:
    \[
        \context(q,\symm) = \set{(q',\symm')\in\aux\mid q\in\trf(q',\symm)}.
    \]
    Intuitively, $\context(q,\symm)$ is the set of auxiliary neurons that recognize prefixes of the strings that neuron $(q,\symm)$ should recognize, i.e., $\context(q,\symm)$ is the working memory from the viewpoint of $(q,\symm)$.        
    In the definition of $\context(q,\symm)$, there is no relationship between the symbols $\symm$ and $\symm'$.       
    Note that for each $(q,\symm)\in\aux\setminus\triggers$, the set $\context(q,\symm)$ is always nonempty because $\nfa$ contains only \useful\ states.%
        \footnote{Indeed, since $(q,\symm)\in\aux$, there is a \useful\ state $q'\in\states$ with $q\in\trf(q',\symm)$. But $(q,\symm)\notin\triggers$ implies $q'\neq\startstate$, causing $(q',\symm')\in\aux$ for some $\symm'\in\powset\inr$. Hence, $(q',\symm')\in\context(q,\symm)$.}
    
    %. . . . . . . . . . . . . . . . . . . . . . 
    \subparagraph*{Weights}
    The design of the connection weights is an intricate part of the construction. For this reason, we spend sufficient attention to the underlying design process.
    Suppose we have an auxiliary neuron $y=(q,\symm)$ that should listen to a context $\context(q,\symm)=\set{z_1,\ldots,z_m}$ of auxiliary neurons and to the input symbol $\symm=\set{u_1,\ldots,u_n}$.
    We desire weights $\worname$ and $\wandname$, where $\worname$ is assigned to each connection $(z_i,y)$ with $i\in\set{1,\ldots,m}$ and $\wandname$ to each connection $(u_j,y)$ with $j\in\set{1,\ldots,n}$, such that the following three properties are satisfied: 
        \romI\ $y$ is not activated if all of $\set{z_1,\ldots,z_m}$ are activated but not yet all of $\set{u_1,\ldots,u_n}$; 
        \romII\ $y$ is already activated if at least one $z\in\set{z_1,\ldots,z_m}$ is activated while all of $\set{u_1,\ldots,u_n}$ are activated; and,
        \romIII\ $y$ is not activated if only all of $\set{u_1,\ldots,u_n}$ are activated.
    This assignment of weights corresponds to the earlier announced ``or'' and ``and'', over $\set{z_1,\ldots,z_m}$  and $\set{u_1,\ldots,u_n}$ respectively.
    
    The above desired properties $\romI$, $\romII$, and $\romIII$ are satisfied by the following weight functions $\worname$ and $\wandname$ that are parameterized by the set cardinalities $m$ and $n$, denoting $\natzero=\nat\setminus\set{0}$,
    \begin{align*}                
        & \worname:\natzero\times\natzero\to\interval 01:\quad \wor mn = \myfrac{1}{(n\mult m + 1)},\\
        & \wandname:\natzero\times\natzero\to\interval 01:\quad \wand mn = \myfrac{m}{(n\mult m + 1)}.
    \end{align*}
    The design of these functions is documented in Appendix~\ref{app:deriv-weights}. The satisfaction of the desired properties is now formalized by the following observations:
    
    \begin{samepage}
    \begin{claim}
        \label{claim:weights}
        Letting $m,n\in\natzero$,
        \begin{itemize}                        
            \item $m\mult\wor mn + (n-1)\mult\wand mn < 1$;
            \item $\wor mn + n\mult\wand mn \geq 1$;
            \item $n\mult\wand mn < 1$.
        \end{itemize}
    \end{claim} 
    \end{samepage}
    
    Next, we can define the weights for all connections. We define the weight function $\weightinit$ from the perspective of the neurons in $\aux\cup\set{x}$, where any unmentioned weights are assumed to be zero:
    \begin{itemize}
        \item for the output neuron $x$, and each $(q,\symm)\in\aux$ where $q$ is an accepting state of $\nfa$ (i.e., $q\in\acceptstates$), we define
        \[
            \weightinit((q,\symm),x) = 1;
        \]
    
        \item for each $(q,\symm)\in\triggers$ and each $y\in\symm$, letting $n=\ssize\symm$, we define
        \[
            \weightinit(y,(q,\symm)) = \myfrac 1n;
        \]
        
        \item for each $(q,\symm)\in\aux\setminus\triggers$ with $\symm=\emptyset$, for each $y\in\context(q,\symm)$, we define
        \[
            \weightinit(y, (q,\symm)) = 1;
        \]
        
        \item for each $(q,\symm)\in\aux\setminus\triggers$ with $\symm\neq\emptyset$, letting $m=\ssize{\context(q,\symm)}$ and $n=\ssize\symm$, for each $y\in\context(q,\symm)$, we define
        \[
            \weightinit(y,(q,\symm)) = \wor mn,
        \]
        and for each $z\in\symm$, we define
        \[
            \weightinit(z,(q,\symm)) = \wand mn;
        \]
        note in this case that $m>0$ and $n>0$.
    \end{itemize}  
    Intuitively, the role of neurons $(q,\symm)\in\aux\setminus\triggers$ with $\symm=\emptyset$ is to propagate past memories forward in time, without requiring new activations of any input neurons.
    
    %...........................................
    \paragraph*{Correctness}    
    We show that $\nw$ implements $\be$ with a delay of one time unit. 
    Let $\str=\mstr{\str_1,\ldots,\str_n}$ be an input string over $\powset\inr$.
    If $n=1$ then $\nw(\str)=\emptyset$, as desired, because the output neuron $x$ only listens to auxiliary neurons (that represent accepting states), which makes it impossible for $x$ to become activated on a string with just one symbol.
    Henceforth, suppose $n\geq 2$. We show that $\nw(\str)=\be(\prefix\str{n-1})$.
    
    \subparagraph*{Direction 1}
    Suppose $x\in\nw(\str)$.
    The activation of $x$ means that there is a maximal chain of auxiliary neurons 
    \[
        \qsX 1,\ldots,\qsX k,
    \]
    that becomes activated when showing $\str$ to $\nw$ (with $k\geq 1$), where $\qsX 1$ is a trigger neuron; $\qsX 2$, \ldots, $\qsX k$ are non-trigger auxiliary neurons; $\qsX{i-1}$ is a presynaptic neuron of $\qsX i$ for each $i\in\set{2,\ldots,k}$; and, $\qsX k$ has activated $x$ while the last input symbol $\str_n$ was shown.
    Let $\strB=\mstr{\sym 1,\ldots,\sym k}$.
    By design of the presynaptic weights of the auxiliary neurons (cf.\ Claim~\ref{claim:weights}), we know that the symbols $\sym 1$, \ldots, $\sym k$ effectively occur in $\str$, and more particularly that $\prefix\str{n-1}$ embeds $\strB$.    
    Next, we show that $\nfa$ accepts $\strB$. Then, since $\be$ is monotone-regular, the embedding of $\strB$ into $\prefix\str{n-1}$ implies $x\in\be(\prefix\str{n-1})$.
    
    Based on the above sequence of auxiliary neurons, the state sequence $\startstate,q_1,\ldots,q_k$ forms an accepting run of $\nfa$ on $\strB$: 
    \begin{itemize}
        \item $q_1\in\trf(\startstate,\sym 1)$ because $\qsX 1$ is a trigger neuron; 
        
        \item for each $i\in\set{2,\ldots,k}$, we have $q_{i}\in\trf(q_{i-1},\sym{i})$ because $\qsX{i-1}$ is a presynaptic neuron of $\qsX{i}$;%
            \footnote{From the definition of presynaptic neuron, we know that the connection from $\qsX{i-1}$ to $\qsX i$ has a strictly positive weight. This weight could only have been defined if $\qsX{i-1}\in\context{\qsX{i}}$.}
        
        \item $q_k$ must be an accepting state, because we assumed that neuron $\qsX k$ has activated $x$.
    \end{itemize}
    
    \subparagraph*{Direction 2}    
    Suppose $x\in\be(\prefix\str{n-1})$.
    Because $\be$ is monotone-regular, $\prefix\str{n-1}$ embeds a string $\strB$ that is accepted by $\nfa$.
    Denote $\strB=\mstr{\sym 1,\ldots,\sym k}$.
    We consider an accepting run $\startstate,q_1,\ldots,q_k$ of $\nfa$ on $\strB$.
    The string $\strB$ can be chosen so that $\startstate\notin\set{q_1,\ldots,q_k}$.%
        \footnote{If $\startstate=q_i$ for some $i\in\set{1,\ldots,k}$ then $q_i,q_{i+1},\ldots,q_k$ is an accepting run on the suffix $\strB'=\mstr{\sym{i+1},\ldots,\sym{k}}$, and we could instead focus on the smaller string $\strB'$ that is also embedded into $\prefix\str{n-1}$.}
    We now consider the following sequence of auxiliary neurons: $\qsX 1,\ldots,\qsX k$.%
        \footnote{These are valid auxiliary neurons because
            \romI\ $\startstate\notin\set{q_1,\ldots,q_k}$ by assumption; and,
            
            \romII\ because $\startstate,q_1,\ldots,q_k$ is an accepting run, we have $q_1\in\trf(\startstate,\sym 1)$ and $q_i\in\trf(q_{i-1},\sym i)$ for each $i\in\set{2,\ldots,k}$.}
    We show that this sequence of auxiliary neurons becomes active in the last $k$ steps of $\nw$ on input $\prefix\str{n-1}$.
    Let $\activeX 1,\ldots,\activeX n$ be the sequence of sets of activated neurons while running $\nw$ on input $\prefix\str{n-1}$, where $\activeX 1=\emptyset$. 
    We show (by induction) for each $i\in\set{1,\ldots,k}$ that $\qsX i\in\activeX{n-k+i}$.
    This results in $\qsX k\in\activeX{n}$, and because $\qsX k$ simulates an accepting state, on the full string $\str$ we thus obtain $x\in\nw(\str)$, as desired.
    
    Before we continue, note that the embedding of $\strB$ into $\prefix\str{n-1}$ concretely means $\sym i\subseteq\str_{n-1-k+i}$ for each $i\in\set{1,\ldots,k}$.
    For the base case, we see that $\qsX 1$ is a trigger neuron because $q_1\in\trf(\startstate,\sym 1)$. So, $\sym 1\subseteq\str_{n-k}$ implies $\qsX 1\in\activeX{n-k+1}$.
    
    For the inductive step, we assume $\qsX{i-1}\in\activeX{n-k+i-1}$ where $i\in\set{2,\ldots,k}$. We show that $\qsX i\in\activeX{n-k+i}$.
    If $\qsX i$ is a trigger neuron then a similar reasoning applies as in the base case, using that $\sym i\subseteq\str_{n-1-k+i}$.
    If $\qsX i$ is not a trigger neuron then $\qsX{i-1}\in\context(q_i,\sym i)$ because $q_i\in\trf(q_{i-1},\sym i)$ and $q_{i-1}\neq\startstate$, and we distinguish between the following two cases:
    \begin{itemize}
        \item Suppose $\sym i=\emptyset$. Then the connection weight from $\qsX{i-1}$ to $\qsX i$ was set to $1$, and the activation $\qsX{i-1}\in\activeX{n-k+i-1}$ implies the activation $\qsX i\in\activeX{n-k+i}$.
        
        \item Suppose $\sym i\neq\emptyset$. In that case, the presynaptic weight design of $\qsX i$ with functions $\worname$ and $\wandname$ (cf.\ Claim~\ref{claim:weights}), applied to the presynaptic activations $\qsX{i-1}\in\activeX{n-k+i-1}$ and $\sym i\subseteq\str_{n-1-k+i}=\str_{n-k+i-1}$, gives the activation $\qsX i\in\activeX{n-k+i}$.
    \end{itemize}
\end{proof}

\begin{example}
    \label{ex:transform}
    We illustrate the construction of the proof of Theorem~\ref{theo:lower}.
    Let $\inr$ consist of four distinct input neurons $a$, $b$, $c$, and $d$. Let $\onr=\set{x}$.
    We define the following input symbols: $\sym 1=\set{a,b,c}$, $\sym 2=\set{b,c}$, and $\sym 3=\set{a,d}$.
    
    Consider the \suitable\ automaton $\nfa$ depicted in Figure~\ref{fig:nfa}, that recognizes a founded regular language over $\powset\inr$; we denote this language as $\nrl x$.%
        \footnote{In Figure~\ref{fig:nfa}, we use the standard notations~\cite{hopcroft-ullman1979,sipser_book2006}: the start state has an entering arrow with no source, and accepting states are indicated with double circles.}
    Language $\nrl x$ is infinite because of the loop between states $q_1$ and $q_2$ over symbol $\sym 2$.
    In particular, $\nrl x$ contains all strings of the form $\mstr{\sym 1,\sym 2^*,\sym 3}$, where $\sym 2^*$ denotes an arbitrary number of repetitions of symbol $\sym 2$.
    Let $\be$ be the monotone-regular behavior over $\inr$ and $\onr$ defined by $\nrl x$: for each input string $\str$ over $\powset\inr$,
    \[
        \monoregexpr x.
    \]
    
    Applying the transformation in the proof of Theorem~\ref{theo:lower} to automaton $\nfa$ results in the positive neural network $\nw$ depicted in Figure~\ref{fig:nw}, where input neurons are indicated by boxes and the nonzero (rational) edge weights are written at the end of a connection. 
    Auxiliary neuron $(q_1,\sym 1)$ is the only trigger neuron; it listens for symbol $\sym 1$.
    Note that the loop between states $q_1$ and $q_2$ of $\nfa$ is preserved as a loop between the auxiliary neurons $(q_1,\sym 2)$ and $(q_2,\sym 2)$.
    We can also see, for example, that the neuron $(q_3,\sym 3)$ is only activated at time $t\in\nat$ when at time $t-1$ both input neurons $a$ and $d$ are active and at least one of the auxiliary neurons $(q_1,\sym 1)$, $(q_2,\sym 2)$, and $(q_1,\sym 2)$; these auxiliary neurons may be viewed as working memory, representing the recognition of prefixes of the desired strings.
    \qed
\end{example}

\begin{figure}
    \begin{center}
    \begin{subfigure}[b]{0.45\textwidth}  
        \centering
        \includegraphics[height=0.4\textheight]{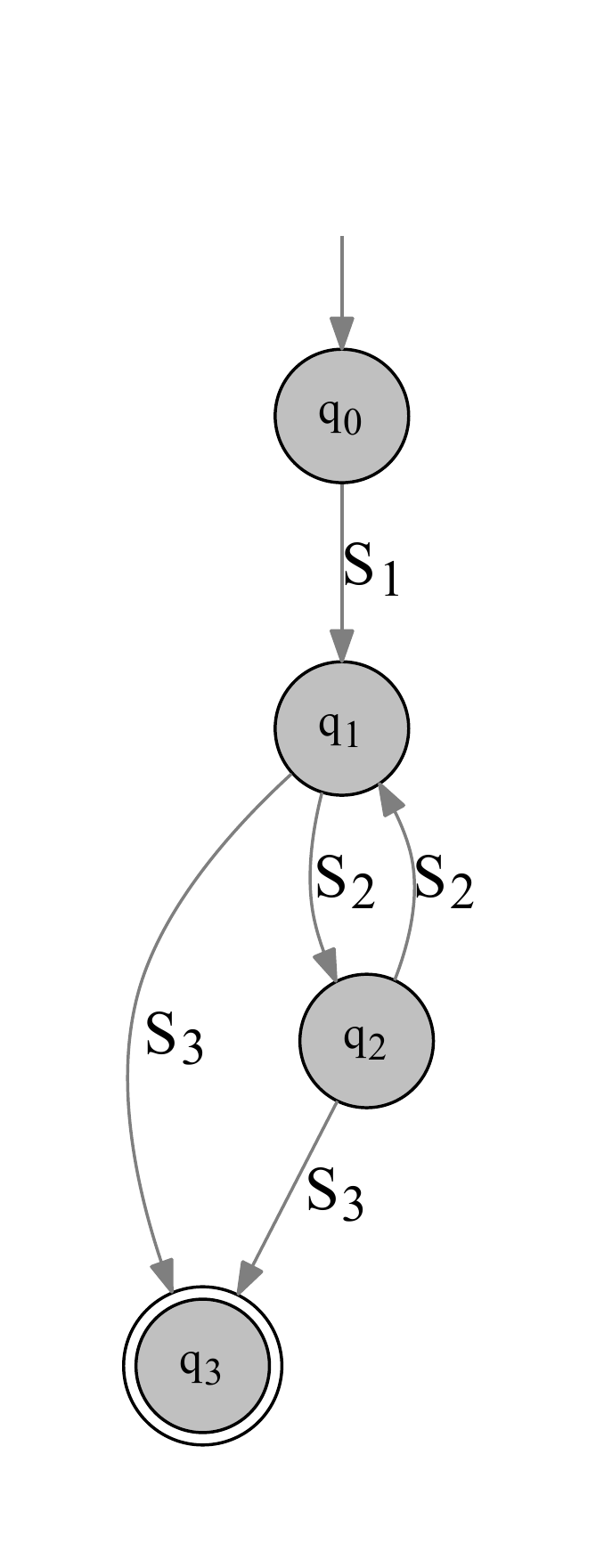}
        \caption{A \suitable\ automaton recognizing a founded regular language. The input neurons are $a$, $b$, $c$, and $d$; the considered input symbols are $\sym 1=\set{a,b,c}$, $\sym 2=\set{b,c}$, and $\sym 3=\set{a,d}$.}
        \label{fig:nfa}
    \end{subfigure}
    ~
    \begin{subfigure}[b]{0.45\textwidth}
        %\centering
        \hspace{-1.5cm}
        \includegraphics[height=0.4\textheight]{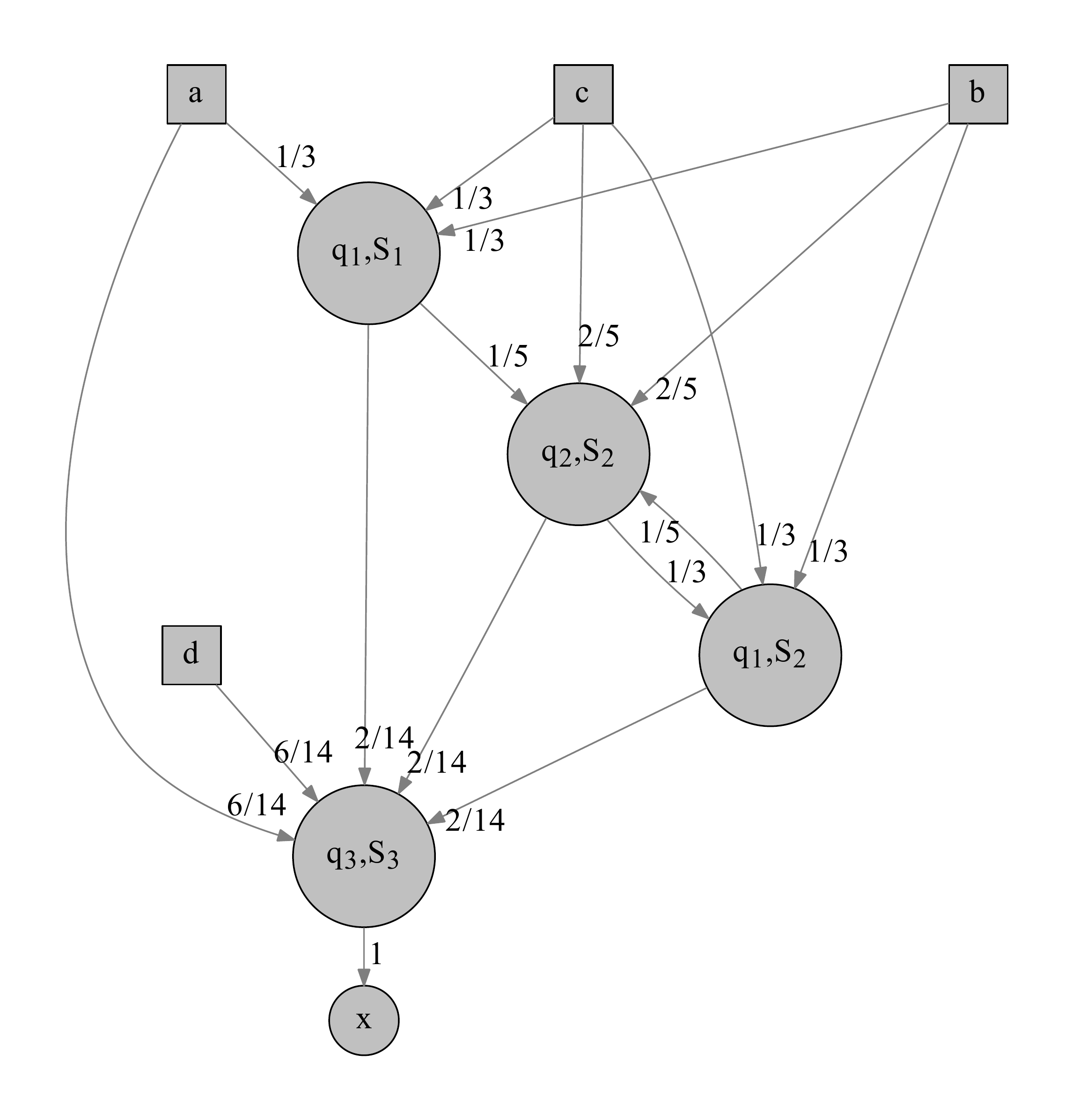}
        \caption{The positive neural network obtained from the automaton in Figure~\ref{fig:nfa}. The boxes represent the input neurons $a$, $b$, $c$, and $d$. The output neuron is $x$. The remaining neurons are auxiliary.}
        \label{fig:nw}
    \end{subfigure}
    \end{center} 
    \caption{The automaton and positive neural network of Example~\ref{ex:transform}.}\label{fig:example}
\end{figure}

\begin{remark}
    \label{remark:preprocessor}
    In the proof of Theorem~\ref{theo:lower}, it is possible to replace the or-and construction of weight functions $\worname$ and $\wandname$ by a two-stage process, at the cost of an additional delay of one time unit. If we ignore this additional delay, the resulting construction is similar to the one described by \simaWcitet{} in their Theorem~4.1, for the setting with one input neuron, with the difference that we only use positive weights and are thus expressing monotone-regular behaviors.        
    Concretely, for each symbol $\symm\in\powset\inr$ used by the automaton, with $\symm\neq\emptyset$, we introduce a \emph{preprocessor} neuron $y_\symm$ having the following presynaptic weight for each $u\in\symm$, where $n=\ssize\symm$:
    \[
        \weightinit(u,y_\symm) = \myfrac 1n.
    \]    
    So, neuron $y_\symm$ will only be activated when all neurons of $\symm$ are activated.
    Next, each auxiliary neuron $(q,\symm)\in\aux$ with $\symm\neq\emptyset$ is configured to read the preprocessor neuron $y_\symm$ instead of the input neurons in $\symm$ directly:
    \begin{itemize}
        \item if $(q,\symm)\in\triggers$ then we define $\weightinit(y_\symm,(q,\symm)) = 1$;
        \item if $(q,\symm)\in\aux\setminus\triggers$ with $\symm=\emptyset$ then for each $z\in\context(q,\symm)$ we define $\weightinit(z,(q,\symm))=1$ as before;
        \item if $(q,\symm)\in\aux\setminus\triggers$ with $\symm\neq\emptyset$, letting $m=\ssize{\context(q,\symm)}$, we define 
        \[
            \weightinit(y_\symm,(q,\symm))=\myfrac m{(m+1)},
        \]
        and for each $z\in\context(q,\symm)$,
        \[
            \weightinit(z,(q,\symm)) = \myfrac 1{(m+1)}.
        \]
    \end{itemize}
    The total implementation delay now becomes two time units: \romI\ trigger neurons listen to the above preprocessor neurons, and \romII\ the output neurons listen to auxiliary neurons that simulate accept states as before.
    We should point out, however, that the construction by \simaWcitet{} only incurs a delay of one time unit because in their setting there is only one input neuron; so, in that setting, all the above preprocessor neurons can be conceptually merged into the single input neuron.
    \qed
\end{remark}

%--------------------------------------
\subsection{Separation}
\label{sub:separation}

Regarding the expressivity of positive neural networks, the upper bound (Theorem~\ref{theo:upper}) and the lower bound (Theorem~\ref{theo:lower}) do not coincide.
Indeed, as illustrated by the following two examples, there are simple monotone-regular behaviors that can not be implemented with zero delay.
The main intuition in these examples, is that the fast reaction speed demanded by zero delay forces too much responsibility on the output neuron, causing this neuron to be erroneously activated. Each example illustrates a different kind of error.

\begin{example}
    \label{ex:wrong-symbol}
    Let $\sym 1$ and $\sym 2$ be two disjoint sets of neurons with $\ssize{\sym 1}\geq 2$ and $\ssize{\sym 2}\geq 2$.
    Let $\inr=\sym 1\cup\sym 2$ and $\onr=\set{x}$.
    Let $\nrl x$ be the following founded regular language over $\powset\inr$:
    \[
        \nrl x = \set{\mstr{\sym 1}, \mstr{\sym 2}}.
    \]
    So, $\nrl x$ is a finite language containing two one-symbol strings.%
        \footnote{An automaton recognizing $\nrl x$ could have two accepting states $q_1$ and $q_2$ besides the start state $\startstate$: reading symbol $\sym i$ from $\startstate$ leads to $q_i$ for $i\in\set{1,2}$.}
    Let $\be$ be the following monotone-regular behavior over $\inr$ and $\onr$ defined by $\nrl x$: for each input string $\str$ over $\powset\inr$, we define
    \[
        \be(\str) = 
            \begin{cases}
                \set{x}& \text{if $\str$ embeds a string $\strB\in\nrl x$;}\\
                \emptyset& \text{otherwise.}
            \end{cases}
    \]
    
    We show that there is no positive neural network that implements $\be$ with zero delay.
    Towards a contradiction, suppose there is such a neural network $\nw$.
    We show that the connections from $\sym 1$ to $x$ and the connections from $\sym 2$ to $x$ interfere with each other, causing $x$ to also be triggered on wrong input symbols.
    
    Because $\nw$ implements $\be$ with zero delay, we have $\nw(\str)=\be(\str)$ for all input strings $\str$ over $\powset\inr$.    
    In particular, $\nw(\mstr{\sym 1})=\set{x}$ and $\nw(\mstr{\sym 2})=\set{x}$.
    These fast output reactions imply that neuron $x$ does not rely on auxiliary neurons, and instead reads input neurons directly.
    So,
    \begin{align}
        & \sum_{u\in \sym 1}\weightinit(u,x)\geq 1 \text{, and} \label{eq:threshold1} \\
        & \sum_{u\in \sym 2}\weightinit(u,x)\geq 1 \label{eq:threshold2}.
    \end{align}    
    We distinguish between the following cases:
    \begin{itemize}
        \item Suppose there exist some $y\in\sym 1$ and $z\in\sym 2$ such that 
        \[
            \weightinit(y,x) + \weightinit(z,x) \geq 1.
        \]
        Define the symbol $\symm=\set{y,z}$. Note that $\symm\in\powset\inr$.
        Because $\ssize{\sym 1}\geq 2$ and $\ssize{\sym 2}\geq 2$, we have $\sym 1\not\subseteq\symm$ and $\sym 2\not\subseteq\symm$.
        Please note that by choice of $y$ and $z$,
        \[
            \sum_{u\in\symm}\weightinit(u,x)\geq 1.
        \]
        So, $\nw(\mstr{\symm})=\set{x}$. 
        But the string $\mstr{\symm}$ does not embed a string from $\nrl x$, giving $\be(\mstr{\symm})=\emptyset$.
        Hence, $\nw(\mstr{\symm})\neq\be(\mstr{\symm})$, which is a contradiction.
        
        \item If the first case does not hold, then we can choose some $y\in\sym 1$ and $z\in\sym 2$ for which 
        \[
            \weightinit(y,x) + \weightinit(z,x) < 1.
        \]
        Define the symbol $\symm=\inr\setminus\set{y,z}$. 
        Note that $\symm\in\powset\inr$.
        Because $y\in\sym 1$ and $z\in\sym 2$, we have $\sym 1\not\subseteq\symm$ and $\sym 2\not\subseteq\symm$.
        Moreover,
        \[
            \sum_{u\in\symm}\weightinit(u,x) = %
                \sum_{u\in\sym 1}\weightinit(u,x) %
                + \sum_{u\in\sym 2}\weightinit(u,x) %
                - \weightinit(y,x) - \weightinit(z,x).
        \]
        By using inequalities \eqref{eq:threshold1} and \eqref{eq:threshold2} from above, and $\weightinit(y,x)+\weightinit(z,x)<1$, we can further obtain:
        \begin{align*}
            \sum_{u\in\symm}\weightinit(u,x) &\geq 2 - (\weightinit(y,x)+\weightinit(z,x))\\
               & > 1.
        \end{align*}
        So, $\nw(\mstr{\symm})=\set{x}$.
        But the string $\mstr{\symm}$ does not embed a string from $\nrl x$, giving $\be(\mstr{\symm})=\emptyset$.
        Again, $\nw(\mstr{\symm})\neq\be(\mstr{\symm})$, which is a contradiction.
    \end{itemize}
    \qed
\end{example}

\begin{example}
    \newcommand{\w}[1]{w_{#1}} % context weight for a particular symbol
    \newcommand{\myaux}[1]{\aux_{#1}} % auxiliary neurons activated by symbol.
    Let $\sym 1$, $\sym 2$, $\sym 3$, and $\sym 4$ be nonempty sets of neurons that are pairwise disjoint.
    Let $\inr=\bigcup_{i=1}^{4}\sym i$ and $\onr=\set{x}$.
    Let $\nrl x$ be the following founded regular language over $\powset\inr$:%
        \footnote{An automaton recognizing this language could splits its computation into two branches from the start state: one branch recognizes the string $\mstr{\sym1,\sym2}$ and the other branch recognizes the string $\mstr{\sym3,\sym4}$.}
    \[
        \nrl x = \set{\mstr{\sym 1, \sym 2}, \mstr{\sym 3, \sym 4}}.
    \]
    Let $\be$ be the monotone-regular behavior over $\inr$ and $\onr$ defined by $\nrl x$: for each input string $\str$ over $\powset\inr$,
    \[    
        \be(\str) = 
            \begin{cases}
                \set{x}& \text{if $\str$ embeds a string $\strB\in\nrl x$;}\\
                \emptyset& \text{otherwise.}
            \end{cases}
    \]
    
    We show there is no positive neural network that implements $\be$ with zero delay.
    Towards a contradiction, suppose there is such a network $\nw=\nwtup$.
    We show that $\nw$ erroneously activates the output neuron on the input string $\mstr{\sym 1,\sym 4}$ or on the input string $\mstr{\sym 3, \sym 2}$.
    Intuitively, the output neuron $x$ confuses the memory contexts emerging from symbols $\sym 1$ and $\sym 3$.
    
    Because $\nw$ implements $\be$ with zero delay, we have $\nw(\str)=\be(\str)$ for all input strings $\str$ over $\powset\inr$.
    In particular, $\nw(\mstr{\sym 1,\sym 2})=\set{x}$ and $\nw(\mstr{\sym 3,\sym 4})=\set{x}$.
    Let $\myaux 1\subseteq\aux$ denote the set of auxiliary neurons activated after reading the string $\mstr{\sym 1}$.        
    Similarly, let $\myaux 3\subseteq\aux$ denote the set of auxiliary neurons activated after reading the string $\mstr{\sym 3}$.
    Denote, for $i\in\set{1,3}$,
    \[
        \w i = \sum_{y\in\myaux i}\weightinit(y,x).
    \]
    Also denote, for $i\in\set{2,4}$,
    \[
        \w i = \sum_{y\in\sym i}\weightinit(y,x).
    \]    
    Now, the output activations $\nw(\mstr{\sym 1, \sym 2})=\set{x}$ and $\nw(\mstr{\sym 3,\sym 4})=\set{x}$ imply
    \begin{align*}
        & \w 1 + \w 2 \geq 1\text{, and}\\
        & \w 3 + \w 4 \geq 1.
    \end{align*}    
    We distinguish between the following cases:%
        \footnote{Although $\nw(\mstr{\sym 1,\sym 2})=\set{x}$ and $\nw(\mstr{\sym 2})=\be(\mstr{\sym 2})=\emptyset$ imply that $\w 1> 0$, the proof does not really use this fact. Similarly, $\w 3> 0$, but the proof does not use this fact.}
    \begin{itemize}
        \item Suppose $\w 1 + \w 4 \geq 1$.
            This implies $\nw(\mstr{\sym 1,\sym 4})=\set{x}$.
            But then $\nw(\mstr{\sym 1,\sym 4})\neq\be(\mstr{\sym 1,\sym 4})$, which is a contradiction.
            
        \item In the other case, we have $\w 1 + \w 4 < 1$.
        Together with $\w 3 + \w 4 \geq 1$ from above, we see that $\w 3 > \w 1$.
        Combining $\w 3>\w 1$ and $\w 1 + \w 2\geq 1$ from above, we obtain $\w 3 + \w 2\geq 1$.
        This implies $\nw(\mstr{\sym 3,\sym 2})=\set{x}$.
        But then $\nw(\mstr{\sym 3,\sym 2})\neq\be(\mstr{\sym 3,\sym 2})$, which is a contradiction.
    \end{itemize}
    \qed
\end{example}

%---------------------------------------------
\subsection{On Zero Delay}
\label{sub:also-zero}

The earlier Example~\ref{ex:chain} has provided a zero delay implementation for monotone-regular behaviors whose underlying founded regular language contains only one string.
Here, we present a larger class of monotone-regular behaviors that can be implemented with zero delay.
First, we call a regular language $\lang$ \emph{converging} if all strings in $\lang$ end with the same symbol.
The following result demonstrates that even monotone-regular behaviors whose underlying founded regular languages are infinite can sometimes be implemented with zero delay:
\begin{theorem}
    \label{theo:also}
    Every monotone-regular behavior where the founded regular language of each output neuron is also converging, can be implemented by a positive neural network with zero delay.
\end{theorem}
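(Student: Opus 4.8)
The plan is to reuse the automaton-to-network translation of Theorem~\ref{theo:lower} for the auxiliary neurons, but to rewire each output neuron so that it fires in the \emph{same} step as the final input symbol rather than one step later. As in Theorem~\ref{theo:lower} I treat each output neuron separately and unite the resulting networks, so fix $x\in\onr$, let $\nfa=\nfaref(x)$ be the \suitable\ automaton recognizing $\nrl x$, and let $\sigma$ be the common last symbol guaranteed by convergence. If $\nrl x=\emptyset$ then $x$ never fires and the empty subnetwork suffices, so assume $\nrl x\neq\emptyset$, whence $\sigma$ is well defined. Writing $n=\len\str$, the key observation is that, because every accepting string ends in $\sigma$, a string $\str$ embeds some $\strB\in\nrl x$ precisely when the last symbol $\str_n$ contains $\sigma$ \emph{and} the prefix $\prefix\str{n-1}$ embeds the part of some accepting string that precedes its final $\sigma$. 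This is what lets a single symbol, $\sigma$, carry the ``and'' of the output neuron.

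The heart of the construction is the case $\sigma\neq\emptyset$ with $\mstr\sigma\notin\nrl x$. Here every accepting string has length at least $2$, so the set of \emph{pre-final} states $P=\set{q\in\states\mid\trf(q,\sigma)\cap\acceptstates\neq\emptyset}$ contains no start state, and each $q\in P$ is a reachable non-start state and hence occurs in some pair of $\pairset\nfa=\aux$. I build all auxiliary neurons and their incoming weights exactly as in the proof of Theorem~\ref{theo:lower}, so that they track the reachable state-symbol pairs of $\nfa$ on prefixes. I then discard the weight-$1$ connections from accepting-state neurons to $x$ and instead wire $x$ with the or-and weight functions: letting the context be $\set{(q,\symm)\in\aux\mid q\in P}$ of size $m$ and letting $n=\ssize\sigma$, I set $\weightinit((q,\symm),x)=\wor mn$ for each context neuron and $\weightinit(u,x)=\wand mn$ for each $u\in\sigma$. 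By Claim~\ref{claim:weights}, $x$ becomes active on reading $\str_n$ exactly when at least one context neuron is active in the preceding activation set \emph{and} all of $\sigma$ is present in $\str_n$. Correctness then follows from the two directions of the proof of Theorem~\ref{theo:lower}: the chain-tracing argument (Direction~1) shows that an active context neuron for $q\in P$ witnesses an embedded prefix reaching $q$, which together with $\sigma\subseteq\str_n$ yields an embedded accepting string; conversely, the inductive chain-lighting argument (Direction~2) shows that whenever $x\in\be(\str)$ the corresponding pre-final context neuron is active after $\prefix\str{n-1}$, so $x$ fires on $\str_n$ with zero delay.

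Two degenerate sub-cases need separate, and simpler, treatment. If $\sigma\neq\emptyset$ but $\mstr\sigma\in\nrl x$, then $x\in\be(\str)$ holds iff $\sigma\subseteq\str_n$ (every longer embedded accepting string also forces $\sigma\subseteq\str_n$, while $\mstr\sigma$ alone is embeddable whenever $\sigma\subseteq\str_n$); so $x$ needs no auxiliary neurons and I simply connect each $u\in\sigma$ to $x$ with weight $\myfrac 1{\ssize\sigma}$. If $\sigma=\emptyset$, the or-and trick is unavailable, but convergence lets me strip the final empty symbol: taking $\nrl{x}'$ to be the right quotient of $\nrl x$ by the one-symbol string $\mstr\emptyset$ (a founded regular language), one checks that $x\in\be(\str)$ iff $\prefix\str{n-1}$ embeds a string of $\nrl{x}'$. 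Hence a network implementing the monotone-regular behavior defined by $\nrl{x}'$ with delay $1$, obtained from Theorem~\ref{theo:lower}, implements $\be$ with zero delay.

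I expect the main obstacle to be the timing bookkeeping of the central case: one must verify that ``a pre-final context neuron is active in the source set of the last transition'' corresponds exactly to an embedded prefix and introduces no spurious activation, which is where convergence is essential and where the two counterexamples of Section~\ref{sub:separation} fail. The $\sigma=\emptyset$ reduction and the $\mstr\sigma\in\nrl x$ shortcut are the remaining delicate points, precisely because they are where the uniform or-and construction breaks down.
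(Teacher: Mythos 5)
Your proposal is correct, and its core is the same as the paper's: keep the auxiliary network of Theorem~\ref{theo:lower} verbatim, and rewire the output neuron $x$ to listen, via the or-and weights $\worname,\wandname$ of Claim~\ref{claim:weights}, to the terminal symbol together with the auxiliary neurons simulating pre-final states --- your context set $\set{(q,\symm')\in\aux\mid q\in P}$ is exactly the paper's $\outcontext$. The differences lie in the degenerate cases, and they are instructive. For terminal symbol $\emptyset$, the paper wires $x$ with weight $1$ to each neuron of $\outcontext$ (arguing $\outcontext\neq\emptyset$ from foundedness), whereas you take the right quotient of $\nrl x$ by $\mstr{\emptyset}$ and invoke the delay-$1$ implementation of Theorem~\ref{theo:lower} for the quotient behavior; both are sound and realize the same firing rule, since the trailing $\emptyset$ imposes a vacuous embedding condition, though your reduction is a genuinely different (and arguably cleaner) mechanism. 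More interestingly, your split on whether $\mstr{\symm}\in\nrl x$, i.e., whether the start state is itself pre-final, is finer than the paper's split on $\outcontext=\emptyset$ versus $\outcontext\neq\emptyset$, and it patches a corner the published construction glosses over: if $\mstr{\symm}\in\nrl x$ but longer strings are also accepted, then $\outcontext\neq\emptyset$ and the paper's or-and wiring, read literally, fails to activate $x$ on inputs whose last symbol contains $\symm$ but for which no context neuron is active (e.g., the one-symbol input $\mstr{\symm}$ itself, since $n\mult\wand mn<1$ by Claim~\ref{claim:weights}), even though the behavior demands activation there because $\mstr{\symm}$ embeds itself. Your observation that convergence together with $\mstr{\symm}\in\nrl x$ collapses the behavior to pure detection of $\symm$ in the final input symbol, implemented by uniform weights $\myfrac 1{\ssize\symm}$, handles exactly this case; your explicit treatment of $\nrl x=\emptyset$ is a similar small repair. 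In short: same construction as the paper in the main case, with a sharper case analysis that is, in one corner, more careful than the paper's own proof.
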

\begin{proof}      
    Let $\be$ be a monotone-regular behavior over an input set $\inr$ and an output set $\onr$ where the founded regular language of each output neuron is also converging.
    Let $\nfaref$ be an \implementation\ for $\be$.
    As in the proof of Theorem~\ref{theo:lower}, we fix some $x\in\onr$.
    Let $\nrl x$ be the language recognized by $\nfaref(x)$.    
    Denote $\nfaref(x)=\nfatup$, where $\alp=\powset\inr$.
    We can modify the construction in the proof of Theorem~\ref{theo:lower} as follows.
    
    First, we define the set $\conv$ of all state-symbol combinations that lead to an accepting state:
    \[
        \conv = \set{(q,\symm)\in\states\times\powset\inr\mid \trf(q,\symm)\cap\acceptstates\neq\emptyset}.
    \]    
    Because $\nrl x$ is converging, there is one symbol $\symm\in\powset\inr$ such that $\symm=\sym i$ for each $\qsX i\in\conv$.%
        \footnote{For each $\qsX i\in\conv$, there is an input string $\str$ over $\powset\inr$ and a run of $\nfaref(x)$ on $\str$ ending with $q_i$ because $q_i$ is a \useful\ state by assumption on $\nfaref(x)$. Since $\qsX i\in\conv$, the extension of $\str$ with $\sym i$ belongs to $\nrl x$. So, for any $\qsX 1\in\conv$ and $\qsX 2\in\conv$, there are strings in $\nrl x$ ending with $\sym 1$ and $\sym 2$; but convergence of $\nrl x$ implies $\sym 1=\sym 2$.}
    We refer to $\symm$ as the \emph{terminal} symbol.
    The only difference compared to the proof of Theorem~\ref{theo:lower}, is that we now let output neuron $x$ listen to \romI\ the symbol $\symm$ directly and \romII\ a different set $\outcontext$ of auxiliary neurons.
    Letting $\aux$ be the set of auxiliary neurons as defined in the proof of Theorem~\ref{theo:lower}, we define
    \[
        \outcontext = \set{(q,\symm')\in\aux\mid (q,\symm)\in\conv}.
    \]
    We now specify the presynaptic weights for $x$, depending on symbol $\symm$:
    \begin{itemize}
        \item Suppose $\symm=\emptyset$. 
        We still have $\outcontext\neq\emptyset$: there is always a string $\str\in\nrl x$ ending with $\symm$, for which there is an accepting run $q_1,\ldots,q_n,q_{n+1}$ where $q_{n+1}\in\trf(q_n,\symm)\cap\acceptstates$; and, $q_n\neq\startstate$ because $\nfaref(x)$ does not read $\symm=\emptyset$ from its start state, implying $(q_n,\symm')\in\outcontext$ for some $\symm'\in\powset\inr$.
        Now, for each $y\in\outcontext$, we define
        \[
            \weightinit(y,x)= 1.
        \]
        
        \item Suppose $\symm\neq\emptyset$. If $\outcontext=\emptyset$ then $x$ only has to detect symbol $\symm$; accordingly, letting $n=\ssize\symm$, for each $z\in\symm$, we define
        \[
            \weightinit(z,x) = \myfrac 1n.
        \]
        If $\outcontext\neq\emptyset$, then we reuse the or-and construction with weight functions $\worname$ and $\wandname$; concretely, letting $m=\ssize\outcontext$ and $n=\ssize\symm$, for $y\in\outcontext$, we define
        \[
            \weightinit(y,x)=\wor mn,            
        \]
        and for each $z\in\symm$, we define
        \[
            \weightinit(z,x)=\wand mn.
        \]
    \end{itemize}    
    All other connections from auxiliary neurons to $x$ are set to zero.
    So, instead of listening to auxiliary neurons that simulate accept states, the output neuron $x$ \romI\ listens to auxiliary neurons that simulate the states preceding accept states, and \romII\ also verifies that the terminal symbol $\symm$ effectively occurs.
\end{proof}

The following example demonstrates that the converse of Theorem~\ref{theo:also} does not hold, so we do not yet have a precise characterization of the monotone-regular behaviors that can be implemented with zero delay.
\begin{example}
    \label{ex:bias}
    Let $\sym 1=\set{a,b}$ and $\sym 2=\set{b,c}$ where $a$, $b$, and $c$ are pairwise different neurons.
    Let $\inr=\sym 1\cup\sym 2$ and $\onr=\set{x}$.
    Let $\nrl x$ be the following founded regular language over $\powset\inr$:
    \[
        \nrl x=\set{\mstr{\sym 1}, \mstr{\sym 2}}.
    \]
    Note that $\nrl x$ is not converging.
    Let $\be$ be the monotone-regular behavior over $\inr$ and $\onr$ defined by $\nrl x$: for each input string $\str$ over $\powset\inr$,
    \[
        \be(\str) = \begin{cases}
            \set{x} & \text{if $\str$ embeds a string $\strB\in\nrl x$;}\\
            \emptyset & \text{otherwise.}
        \end{cases}
    \]
    
    The following positive neural network $\nw=\nwtup$ implements $\be$ with zero delay: $\aux=\emptyset$, and
    \begin{align*}
        & \weightinit(a,x)=\myfrac 13,\\
        & \weightinit(b,x)=\myfrac 23,\\
        & \weightinit(c,x)=\myfrac 13.
    \end{align*}
    In contrast to Example~\ref{ex:wrong-symbol}, we can not fool this network to trigger $x$ on a wrong input symbol like $\set{a,c}$.
    That is because $\weightinit$ assigns a heavier weight to connection $(b,x)$, which renders the input neuron $b$ crucial for the activation of $x$.
    \qed
\end{example}

%===================================================

\section{Conclusion and Future Work}
\label{sec:conclusion}

We have studied the expressivity of positive neural networks with multiple input neurons.
Within the framework of monotone-regular behaviors, we have suggested both an upper and lower bound on the expressivity.
These bounds do not coincide when we take into account the delay by which a behavior is implemented.
We now discuss several avenues for further work.

\paragraph*{Single input neurons}

    If there is only a single input neuron, \simaWcitet{} show that all regular languages can be recognized by a neural network with a delay of one time unit.
    Our article has shown a similar result for monotone-regular behaviors, but in the case of multiple input neurons. 
    It might be interesting to better understand the relationship between these results. 
    
    Symbols over multiple input neurons could be translated to a single input neuron as follows: supposing there are $n$ ordered input neurons, each subset of input neurons can be represented as a binary code over $n$ bits. This way, each sequence of input symbols can be translated to a sequence of binary codes, and the resulting sequence may be viewed as a single bit string.
    However, this construction would increase output delay. Moreover, it is not clear if this technical construction can be achieved inside a positive neural network itself, because on every time step an entirely new symbol arrives over the multiple input neurons; the positive neural network might not be able to buffer the new symbols while it is translating the previous symbols.

\paragraph*{Characterizing zero delay}

    We have seen that seemingly simple monotone-regular behaviors already require a delay of one time unit (Section~\ref{sub:separation}).
    We have also made some first steps towards identifying the class of monotone-regular behaviors that can be implemented with zero delay (Section~\ref{sub:also-zero}).
    However, a precise characterization is missing.    
    Example~\ref{ex:bias} suggests that in case of multiple terminal symbols in the underlying regular languages, we could seek for an assignment of nonuniform weights to the input neurons. 
    Perhaps the existence of such nonuniform weights can be related to the syntactical properties of the accompanying automata.

\paragraph*{Minimal network size}

    Like previous complexity-theoretic analyses of neural networks~\cite{sima_survey2003}, one could examine what minimal number of auxiliary neurons is necessary for implementing certain monotone-regular behaviors.
    Note that a lower bound on the number of states in an automaton implementation of a behavior does not directly provide a lower bound on the number of neurons, because clever design of the weights could perhaps pack more functionality into fewer neurons than the number of automaton states (or symbol-state combinations). Such efficient implementations were previously studied, e.g.\ by \citet{horne_fsm1996} for the simulation of deterministic automata by recurrent neural networks.
    For positive neural networks, it might be possible to explore the relationship with (monotone) AND-OR boolean circuits, where \citet{alon_monotone1987} have previously obtained lower bounds on the number of gates (neurons) for implementing certain boolean functions.
        
    We should note, however, that some of the existing constructions, e.g.~\cite{horne_fsm1996} introduce delays in which the overall neural network would process incoming input symbols.
    To compare such constructions with the results regarding delay in this article, perhaps some of the constructed sub-circuits could be viewed as being computed instantaneously, and would thus not contribute to the overall delay.
    
\paragraph*{Inhibition}

    Previous works on the expressive power of neural networks have often assumed negative connection weights between neurons, allowing neurons to inhibit the activation of their postsynaptic neurons~\cite{sima_survey2003}.
    It is interesting to extend our work with this feature, but in such a way that it is still biologically plausible.
    In particular, one should make a distinction between excitatory and inhibitory neurons~\cite{gerstner_book2014}: the postsynaptic weights of excitatory neurons are always positive and the postsynaptic weights of inhibitory neurons are always negative.
    Both neuron types are used in winner-take-all circuits~\cite{kappel_markov2014}.
    
    As suggested by the findings of \simaWcitet{}, inhibitory neurons could allow the neural network to test for the explicit absence of input activations, lifting the expressive power to ``regular'' behaviors that, in contrast to monotone-regular behaviors, depend on very precise input symbols that are not embedded in surrounding input noise.
    For example, a neural network might activate an output neuron whenever the input symbol $\set{a,b,c}$ occurs in its pure form, i.e., no other input neurons are active besides $a$, $b$, and $c$.
    
    Another view, is that inhibitory neurons have a stabilizing effect, at least in a winner-take-all setting~\cite{kappel_markov2014}: inhibitory neurons let the most strongly recognized patterns survive; otherwise perhaps too many insignificant pattern pieces will be floating around in the limited working memory.    
    
    Possibly, multiple biologically plausible topologies with inhibition are possible. The expressivity of the resulting neural network models, including any results regarding delays, could strongly depend on the manner by which inhibitory and excitatory neurons are connected.   
    
\paragraph*{Noise and continuous time}

    Noise is an important aspect of real biological neurons~\cite{gerstner_book2014}, and it might be an important resource for expressing nondeterministic computations~\cite{maass_noise2014}.
    It would be interesting to see how the results regarding regular languages can be extended to this framework. One possibility is to study the quality by which a noisy positive neural network approximates a true monotone-regular behavior. Here, quality might be formalized as the probability of producing correct output activations given a certain probability distribution on the noise.
    
    Moreover, the model studied in this article is based on discrete time steps. 
    Again, real-world neurons do not obey this restriction, so it appears interesting to investigate if our results can be extended to a setting with continuous time.
    However, the restriction to discrete time steps may enable an understanding of neurons that operate in continuous time by focusing on the causal relationships between neuron activations.
    From this viewpoint, regular languages could also provide insights into the workings of neurons operating in continuous time.
    
\paragraph*{Learning}

    An important aspect of biological neurons is that they modify their presynaptic weights over time through a learning mechanism called STDP, that depends on the relative timing of neuron activations~\cite{gerstner_book2014}.%
        \footnote{The acronym ``STDP'' stands for spike-timing-dependent plasticity.}
    One could for example consider reward-modulated STDP, where connection weights are updated at some time point when the overall performance of the neural network has recently improved~\cite{gerstner_book2014}. 
    In a biologically plausible setting, it seems intriguing to understand how overall behavior and consciousness could emerge from dopamine neurons signaling reward to an organism~\cite{schultz_reward2013}.
    
\paragraph{Forbidding recurrent connections}

    Weak recurrent connections in biological neural networks might already be sufficient to provide an interaction of working memory with new inputs~\cite{buonomano2009}.
    So, pure looping behavior as needed in the recognition of regular languages might not be really needed by an organism.
    So, in a further expressivity study, one could simplify positive neural networks by forbidding recurrent connections.
    This way, only finite regular languages can be recognized.
    It seems interesting to understand the resulting model from a practical perspective.
    In particular, one might verify if the resulting networks are still useful for real-world tasks. 
    It seems that memories of larger stimuli require more neurons, and longer activation chains between those neurons.

\paragraph*{Sharing auxiliary neurons}

    The construction for the expressivity lower bound (Theorem~\ref{theo:lower}) builds a separate network of auxiliary neurons for each output neuron.
    In biological networks, multiple output neurons share a pool of auxiliary neurons~\cite{buonomano2009}.
    It seems interesting to understand the impact of sharing on the behaviors exhibited by the individual output neurons.

\paragraph*{Multiple interconnected networks}
    In this article, we have investigated the expressiveness of single networks where all neurons are directly connected to each other.
    However, when the number of neurons increases, the number of direct connections increases quadratically. This would become impractical to implement in biological neural networks. 
    Indeed, one hypothesis is that the brain is composed of many small networks that are connected strongly internally, but perhaps only weakly externally~\cite{kappel_markov2014}.
    It is interesting to understand how such an organization of the connections influences the expressivity.

%-----------------------------------------------
\subsection*{Acknowledgments}
The first author thanks Robert~Brijder for suggestions regarding the formalization of finite automata.

%===================================================
\bibliographystyle{apalike}

\appendix

\addtocontents{toc}{\protect\setcounter{tocdepth}{1}}

%=============================================
\section{Design of the Weights (Claim~\ref{claim:weights})}
\label{app:deriv-weights}

Denote $\natzero=\nat\setminus\set{0}$.
Let $m\in\natzero$ and $n\in\natzero$.
Suppose we have two sets $Y$ and $Z$ with $m=\ssize Y$ and $n=\ssize Z$. Both sets should form the presynaptic neurons of a neuron $x$. 
We want to find weights $w_1$ and $w_2$, to be assigned to the neurons in $Y$ and $Z$ respectively, such that 
\begin{enumerate}[1)]
    \item \label{enu:below} $m\mult w_1 + (n-1)\mult w_2 < 1$;
    \item \label{enu:above} $w_1 + n\mult w_2 \geq 1$;
    \item \label{enu:below2} $n\mult w_2 < 1$.
\end{enumerate}
Condition~\ref{enu:below} expresses that all neurons from $Z$ should be activated before $x$ may be activated, regardless of how many neurons in $Y$ are activated.
Condition~\ref{enu:above} expresses that if all neurons in $Z$ are activated then a single neuron from $Y$ suffices to activate $x$; but Condition~\ref{enu:below2} stipulates that at least one neuron of $Y$ should be activated.
So, neuron $x$ requires all neurons of $Z$ and just a single neuron from $Y$.
Our design of such weights is based on a denominator $f\in\natzero$:
\begin{align*}
    & w_1 = \myfrac 1f\text{,}\\
    & w_2 = \myfrac{(1-\myfrac 1f)}{n}.
\end{align*}
We see that Condition~\ref{enu:above} is satisfied for any $f\in\natzero$:
\begin{align*}
    \myfrac 1f + n\left(\myfrac{(1-\myfrac 1f)}n\right) 
        &= \myfrac 1f + (1- \myfrac 1f)\\
        &= 1 \geq 1.        
\end{align*}
Also, Condition~\ref{enu:below2} is satisfied for any $f\in\natzero$:
\begin{align*}
    n\left(\myfrac{(1-\myfrac 1f)}{n}\right) = 1-\myfrac 1f < 1.
\end{align*}
For Condition~\ref{enu:below}, we solve for $f$:
\begin{align*}
    m\mult w_1 + (n-1)\mult w_2 &< 1 ;\\
    \myfrac mf + (1-\myfrac 1f)\braces{\frac{n-1}n} &< 1 ;\\
    %
    %m + (f-1) \braces{\frac{n-1}n} &< f ;\\
    %
    %(f-1)\braces{\frac{n-1}n} - f &< -m ;\\
    %
    %f\braces{\frac{n-1}n} - f -\braces{\frac{n-1}n} &< -m ;\\
    %
    %f\braces{\frac{n-1}n} - f &< \braces{\frac{n-1}n} - m ;\\
    %
    %f\braces{\frac{-1}n} &< \braces{\frac{n-1}n} - m;\\
    %
    %f &> -(n-1) + m\mult n ;\\
    %
    %f &> m\mult n -n + 1;\\
    %
    &\vdots \\
    f &> n(m-1) + 1.
\end{align*}
So, we can choose $f=n\mult m + 1$.%
    \footnote{Because $n>0$, we can make the following derivation: $m-1< m$; $n(m-1)< n\mult m$; $n(m-1) + 1 < n\mult m +1$.}
%$f=n(m-1) + 2$.

%============================================

\end{document}